\title[Nearly optimal pseudo-regret for stochastic and adversarial bandits]{An algorithm with nearly optimal pseudo-regret \\for both stochastic and adversarial bandits}
\newcommand{\cB}{\mathcal{B}}
\newcommand{\expp}{\textsc{Exp3.P}}
\newcommand{\clower}{C}
\newcommand{\Cgap}{C_\mathrm{gap}}
\newcommand{\Cib}{C_\mathrm{1b}}
\newcommand{\Caa}{C_\mathrm{4a}}
\newcommand{\CVV}{2\Caa}
\newcommand{\Ci}{C_\mathrm{init}}
\newcommand{\Cpp}{C_\mathrm{p}}
\newcommand{\Cwid}{C_\mathrm{w}}
\newcommand{\ci}{C_\mathrm{E}}
\newcommand{\ns}{{n_S}}
\newcommand{\nbi}{{n_{B,i}}}
\newcommand{\hD}{\hat{D}}
\newcommand{\hmu}{\hat{\mu}}
\newcommand{\bmu}{\bar{\mu}}
\newcommand{\tO}[1]{\tilde{O}\left(#1\right)}
\newcommand{\bigO}[1]{O\left(#1\right)}
\newcommand{\EE}[1]{\bE\left[#1\right]}
\newcommand{\VV}[1]{\bV\left[#1\right]}
\newcommand{\Esto}[1]{\bE_\sto\left[#1\right]}
\newcommand{\Psto}[1]{\bP_\sto\left\{#1\right\}}
\newcommand{\Eadv}[1]{\bE_\adv\left[#1\right]}
\newcommand{\Padv}[1]{\bP_\adv\left\{#1\right\}}
\newcommand{\EEik}[1]{\bE_{ik}\left[#1\right]}
\newcommand{\PPik}[1]{\bP_{ik}\left\{#1\right\}}
\newcommand{\II}[1]{\mathbb{I}\left[#1\right]}
\newcommand{\qsto}{{q_\sto}}
\newcommand{\qad}[1]{{q_\adv^{#1}}}
\newcommand{\qadv}{{\qad{}}}
\newcommand{\pr}[1]{\mathbb{P}\left\{#1\right\}} 
\newcommand{\expt}[1]{\mathbb{E}\mpa{#1}} 
\newcommand{\bE}{\mathbb{E}} 
\newcommand{\bV}{\mathbb{V}} 
\newcommand{\bP}{\mathbb{P}} 
\newcommand{\Id}[1]{\mathbb{I}[#1]} 
\newcommand{\cA}{\mathcal{A}} 
\newcommand{\T}{n} 
\newcommand{\width}{{\mathrm{width}}} 
\newcommand{\bwidth}{\overline{\mathrm{width}}} 
\newcommand{\bucb}{\overline{\mathrm{ucb}}}
\newcommand{\lcb}{\mathrm{lcb}}
\newcommand{\blcb}{\overline{\mathrm{lcb}}}
\newcommand{\emu}{\tilde{\mu}} 
\newcommand{\egp}{\tilde{\Delta}} 
\newcommand{\Gitt}[3]{G_{#1}(#2, #3)}
\newcommand{\GAitt}[3]{G_{\cA,#1}(#2, #3)}
\newcommand{\DAitt}[3]{D_{\cA,#1}(#2, #3)} 
\newcommand{\sapo}{\textrm{SAPO}} 
\newcommand{\spa}[1]{\left(#1\right)} 
\newcommand{\mpa}[1]{\left[#1\right]} 
\newcommand{\pseudoR}{{\overline{R}}}
\newcommand{\adv}{{\mathrm{adv}}}
\newcommand{\sto}{{\mathrm{sto}}}
\newcommand{\ada}{{\mathrm{ada}}}
\newcommand{\obl}{{\mathrm{obl}}}
\newcommand{\CASE}[1]{{\em Case #1}}
\newcommand{\tC}{{t_C}}
\newcommand{\hist}{{\cal H}}
\newcommand{\ND}{{\mathit{ND}}}
\newcommand{\numPH}{M}
\begin{document}

\maketitle

\begin{abstract}
	We present an algorithm that achieves almost optimal pseudo-regret bounds against adversarial and stochastic bandits. Against adversarial bandits the pseudo-regret is $\bigO{\sqrt{K n \log n}}$ and against stochastic bandits the regret is $\bigO{\sum_i (\log n)/\Delta_i}$. We also show that no algorithm with $\bigO{\log n}$ pseudo-regret against stochastic bandits can achieve $\tO{\sqrt{n}}$ expected regret against adaptive adversarial bandits. This complements previous results of \cite{BS12} that show $\tO{\sqrt{n}}$ expected adversarial regret with $\bigO{(\log n)^2}$ stochastic pseudo-regret. 
\end{abstract}

\section{Introduction}\label{sec_intro}

We consider the multi-armed bandit problem, which is the most basic example of a sequential decision problem with an exploration-exploitation trade-off. In each time step $t=1,2,\ldots,n$, the player has to play an arm $I_t \in \{1,\ldots,K\}$ from this fixed finite set and receives  reward~$x_{I_t}(t) \in[0,1]$ depending on its choice\footnote{
	We assume that the player knows the total number of time steps~$n$.}. 
The player observes only the reward of the chosen arm, but not the rewards of the other arms $x_i(t)$, $i \neq I_t$. The player's goal is to maximize its total reward $\sum_{t=1}^n x_{I_t}(t)$, and this total reward is compared to the best total reward of a single arm, $\sum_{t=1}^n x_i(t)$. To identify the best arm the player needs to explore all arms by playing them, but it also needs to limit this exploration to often play the best arm. The optimal amount of exploration constitutes the exploration-exploitation trade-off.

Different assumptions on how the rewards $x_i(t)$ are generated have led to different approaches and algorithms for the multi-armed bandit problem. In the original formulation \citep{R52} it is assumed that the rewards are generated independently at random, governed by fixed but unknown probability distributions with means~$\mu_i$ for each arm~$i=1,\ldots,K$. This type of bandit problem is called {\em stochastic}. The other type of bandit problem that we consider in this paper is called non-stochastic or {\em adversarial}~\citep{EXP3}. Here the rewards may be selected arbitrarily by an adversary and the player should still perform well for any selection of rewards. An extensive overview of multi-armed bandit problems is given in~\citep{BC12}.      
 
A central notion for the analysis of stochastic and adversarial bandit problems is the regret~$R(n)$, the difference between the total reward of the best arm and the total reward of the player:
\[ R(n) = \max_{1\leq i \leq K} \sum_{t=1}^n x_i(t) - \sum_{t=1}^nx_{I_t}(t) . \] 
Since the player does not know the best arm beforehand and needs to do exploration, we expect that the total reward of the player is less than the total reward of the best arm. Thus the regret is a measure for the cost of not knowing the best arm. In the analysis of bandit problems we are interested in high probability bounds on the regret or in bounds on the expected regret. Often it is more convenient, though, to analyze the pseudo-regret 
\[ \pseudoR(n) = \max_{1\leq i \leq K} \EE{\sum_{t=1}^n x_i(t) - \sum_{t=1}^nx_{I_t}(t)}  \] 
instead of the expected regret
\[ \EE{R(n)} = \EE{\max_{1\leq i \leq K} \sum_{t=1}^n x_i(t) - \sum_{t=1}^nx_{I_t}(t)} . \]

While the notion of pseudo-regret is weaker than the expected regret with $\pseudoR(n) \leq \EE{R(n)}$, bounds on the pseudo-regret imply bounds on the expected regret for adversarial bandit problems with {\em oblivious} rewards $x_i(t)$ selected independently from the player's choices. The pseudo-regret also allows for refined bounds in stochastic bandit problems.

\subsection{Previous results}

For adversarial bandit problems, algorithms with high probability bounds on the regret are known~\cite[Theorem 3.3]{BC12}: with probability $1-\delta$,
\[ R_\adv(n) = \bigO{\sqrt{n \log(1/\delta)}}. \]
For stochastic bandit problems, several algorithms achieve logarithmic bounds on the pseudo-regret, e.g.~\cite{UCB1}: 
\[ \pseudoR_\sto(n) = \bigO{\log n}. \]
Both of these bounds are known to be best possible. 

While the result for adversarial bandits is a worst-case --- and thus possibly pessimistic --- bound that holds for any sequence of rewards, the strong assumptions for stochastic bandits may sometimes be unjustified. 
Therefore an algorithm that can adapt to the actual difficulty of the problem is of great interest. 
The first such result was obtained by \cite{BS12}, who developed the SAO algorithm that with probability $1-\delta$ achieves
\[ R_\adv(n) \leq \bigO{(\log n)\sqrt{n\log (n/\delta)}} \]
regret for adversarial bandits and 
\[ \pseudoR_\sto(n) = \bigO{(\log n)^2} \]
pseudo-regret for stochastic bandits. 

It has remained as an open question if a stochastic pseudo-regret of order $\bigO{(\log n)^2}$ is necessary or if the optimal $\bigO{\log n}$ pseudo-regret can be achieved while maintaining an adversarial regret of order~$\sqrt{n}$.

\subsection{Summary of new results}

We give a twofold answer to this open question. We show that stochastic pseudo-regret of order $\bigO{(\log n)^2}$ is necessary for a player to achieve high probability adversarial regret of order~$\sqrt{n}$ against an oblivious adversary, and to even achieve expected regret of order~$\sqrt{n}$ against an adaptive adversary. But we also show that a player can achieve $\bigO{\log n}$ stochastic pseudo-regret and $\tO{\sqrt{n}}$ adversarial {\em pseudo-regret} at the same time. This gives, together with the results of~\citep{BS12}, a quite complete characterization of algorithms that perform well both for stochastic and adversarial bandit problems.

More precisely, for any player with stochastic pseudo-regret bound of order $\bigO{(\log n)^\beta}$, $\beta < 2$, and any $\epsilon >0$, $\alpha < 1$, there is an adversarial bandit problem for which the player suffers $\Omega(n^\alpha)$ regret with probability $\Omega(n^{-\epsilon})$. Furthermore, there is an adaptive adversary against which the player suffers $\Omega(n^\alpha)$ expected regret. Secondly, we construct an algorithm with 
\[ \pseudoR_\sto(n) = \bigO{\log n}  \]
and
\[ \pseudoR_\adv(n) = \bigO{\sqrt{n \log n}} . \]

At first glance these two results may appear contradictory for $\alpha - \epsilon > 1/2$, as the lower bound seems to suggest a pseudo-regret of $\Omega(n^{\alpha-\epsilon})$. This is not the case, though, since the regret may also be negative. Indeed, consider an adversarial multi-armed bandit that initially gives higher rewards for one arm, and from some time step on gives higher rewards for a second arm. A player that detects this change and initially plays the first arm and later the second arm, may outperform both arms and achieve negative regret. But if the player misses the change and keeps playing the first arm, it may suffer  large regret against the second arm.  

In our analysis we use both mechanisms. For the lower bound on the pseudo-regret we show that a player with little exploration (which is necessary for small stochastic pseudo-regret) will miss such a change with significant probability and then will suffer large regret. For the upper bound we explicitly compensate possible large regret that occurs with small probability by negative regret that occurs with sufficiently large probability. For the lower bound on the expected regret we construct an adaptive adversary that prevents such negative regret.    
Consequently, our results exhibit one of the rare cases where there is a significant gap between the achievable pseudo-regret and the achievable expected regret. 

The explicit consideration of negative regret is one of the technical contributions of this work. Another, maybe even more significant contribution, is a weak testing scheme for non-stochastic arms. 
This weak testing scheme is necessary since $\bigO{\log n}$ stochastic pseudo-regret allows only for very little exploration. 
Each individual weak test has a constant false positive rate (predicting a non-stochastic arm although the arm is stochastic) and a constant false negative rate (missing a non-stochastic arm).
To avoid classifying a stochastic arm as non-stochastic, an arm is classified as non-stochastic only after $\bigO{\log n}$ positive tests. This reduces the false positive rate of a decision to acceptable $\bigO{1/n}$.  
Conversely, this delayed detection needs to be accounted for in the regret analysis when the arms are indeed non-stochastic.

\section{Definitions and statement of results}

In a multi-armed bandit problem with arms $i=1,\ldots,K$ the interaction of a player with its environment is governed by the following protocol:
\begin{quote}
For time steps $t=1,\ldots,n$:
\begin{enumerate}
	\item The player chooses an arm $I_t \in \{1,\ldots,K\}$, possibly using randomization. 
	\item The player receives and observes the reward $x_{I_t}(t)$.\\
	      It does not observe the reward from any other arm $i \neq I_t$.
\end{enumerate}
\end{quote}
The player's choice~$I_t$ may depend only on information available at this time, namely $I_1,\ldots,I_{t-1}$ and $x_{I_1}(1),\ldots,x_{I_{t-1}}(t-1)$.
If the bandit problem is stochastic, then the rewards $x_i(t)$ are generated independently at random. If the bandit problem is adversarial, then the rewards are generated arbitrarily by an adversary. We assume that all rewards $x_i(t) \in [0,1]$ and that the number of time steps~$n$ is known to the player.

\subsection{Stochastic multi-armed bandit problems}

In a stochastic multi-armed bandit problem the rewards for each arm~$i$ are generated by a fixed but unknown probability distribution $\nu_i$ on~$[0,1]$. All rewards $x_i(t)$, $1 \leq i \leq K$, $1 \leq t \leq n$, are generated independently at random with $x_i(t) \sim \nu_i$. 

Important quantities are the average rewards of the arms, $\mu_i = \EE{x_i(t)}$, the average reward of the best arm $\mu^*=\max_i \mu_i$, and the resulting gaps $\Delta_i=\mu^*-\mu_i$. 

The goal of the player is to achieve low pseudo-regret which for a stochastic bandit problem can be written as
\begin{align*}
    \pseudoR_\sto(n)  
    = \max_{1\leq i \leq K} \EE{\sum_{t=1}^n x_i(t) - \sum_{t=1}^n x_{I_t}(t)} 
    = \sum_{i=1}^K \Delta_i \EE{T_i(n)} ,
\end{align*}
where $T_i(n) = \#\{1 \leq t \leq n: I_t = i\}$ is the number of plays of arm~$i$. It can be shown~\citep{UCB1} that --- among others --- upper confidence bound algorithms achieve 
\[ \EE{T_i(n)} = \bigO{\frac{\log n}{\Delta_i^2}} \]
for any arm $i$ with $\Delta_i > 0$ such that  
\[ \pseudoR_\sto(n) = \bigO{\sum_{i: \Delta_i > 0}\frac{\log n}{\Delta_i}} .\]
It can be even shown that for arms $i$ with $\Delta_i > 0$, 
\[ T_i(n) = \bigO{\frac{\log (n/\delta)}{\Delta_i^2}} \]
with probability $1-\delta$ when $n$ is known to the player.

\subsection{Adversarial multi-armed bandit problems}

In adversarial bandit problems the rewards are selected by an adversary. If this is done beforehand (before the player interacts with the environment), then the adversary is called {\em oblivious} as the selection of rewards is independent from the arms~$I_t$ chosen by the player. In this case any upper bound on the pseudo-regret that holds for any selection of rewards is also an upper bound on the expected regret. 

If the selection of rewards $x_i(t)$, $1 \leq i \leq K$, depends on which arms $I_1,\ldots,I_{t-1}$ the player has chosen in the past, then the adversary is called {\em adaptive}. In this case a bound on the pseudo-regret does not necessarily translate into a bound on the expected regret.
Nevertheless, strong bounds on the regret against an adaptive adversary are known for the \expp\ algorithm~\citep{EXP3}:
\begin{theorem}[{\citealp[Theorem 3.3]{BC12}}] \label{t:EXP3}
When \expp\ is run with appropriate parameters depending on $n$, $K$, and $\delta$, then with probability $1-\delta$ its regret satisfies 
\[ R_\ada(n) = \bigO{\sqrt{nK\log(K/\delta)}} . \] 
\end{theorem}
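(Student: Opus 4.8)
The plan is to reconstruct the standard analysis of \expp, which augments plain exponential weights with importance-weighted, optimistically biased reward estimates so that a high-probability (rather than merely in-expectation) guarantee can be extracted even against an adaptive adversary. I would run the algorithm with sampling distribution $p_i(t) = (1-\gamma)\,w_i(t)/\sum_j w_j(t) + \gamma/K$, mixing a uniform exploration floor $\gamma/K$ into the weights, and form the biased estimates $\hat{x}_i(t) = \bigl(x_i(t)\,\II{I_t=i} + \beta\bigr)/p_i(t)$ with the multiplicative update $w_i(t+1) = w_i(t)\exp(\eta\,\hat{x}_i(t))$. The three parameters $\gamma,\eta,\beta$ are left free and tuned in terms of $n$, $K$, $\delta$ only at the very end.

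First I would carry out the purely deterministic potential argument on $W_t = \sum_i w_i(t)$. Bounding $\log(W_{n+1}/W_1)$ from below by the single-arm contribution $\eta\sum_t \hat{x}_j(t) - \log K$ and from above through $\exp(z)\le 1+z+z^2$ applied to each per-step increment (valid because the floor $p_i(t)\ge\gamma/K$ keeps $\eta\hat{x}_i(t)$ bounded) yields the regret-in-estimates inequality
\[ \sum_{t=1}^n \hat{x}_j(t) - \sum_{t=1}^n \sum_{i=1}^K p_i(t)\,\hat{x}_i(t) \;\le\; \frac{\log K}{\eta} + \eta \sum_{t=1}^n \sum_{i=1}^K p_i(t)\,\hat{x}_i(t)^2 , \]
up to a factor $1/(1-\gamma)$ that contributes the $\gamma n$ exploration cost. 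Two features make this usable: the clean identity $\sum_i p_i(t)\hat{x}_i(t) = x_{I_t}(t) + \beta K$, which recovers the player's realized reward up to the bias, and the bound $p_i(t)\hat{x}_i(t)^2 \le (1+\beta)\hat{x}_i(t)$, which lets me control the second-order term through the concentration of $\sum_i \hat{x}_i(t)$.

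The second ingredient --- and the main obstacle --- is the high-probability concentration step. Because the adversary is adaptive, the estimators $\hat{x}_i(t)-x_i(t)$ form a martingale-difference sequence with heavy tails, since the factor $1/p_i(t)$ can be enormous when an arm is rarely sampled, so ordinary sub-Gaussian concentration fails. The additive bias $+\beta/p_i(t)$ is precisely what turns $\hat{x}_i(t)$ into an optimistic upper-confidence estimate: I would verify the exponential-supermartingale property $\bE\bigl[\exp\bigl(\beta(x_i(t)-\hat{x}_i(t))\bigr)\mid \mathcal{F}_{t-1}\bigr]\le 1$ for each fixed $i$, and then Markov's inequality together with a union bound over the $K$ arms gives
\[ \sum_{t=1}^n x_i(t) \;\le\; \sum_{t=1}^n \hat{x}_i(t) + \frac{\log(K/\delta)}{\beta} \qquad \textrm{for all } i \]
with probability at least $1-\delta$. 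Pinning down the exponential-moment estimate with the correct constants, while respecting that the adaptive adversary reveals only $x_{I_t}(t)$, is the delicate part of the whole argument.

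Finally I would combine the two pieces on the high-probability event. Substituting the concentration bound for the comparator arm $j$ into the potential inequality, replacing $\sum_t\sum_i p_i(t)\hat{x}_i(t)$ by $\sum_t x_{I_t}(t) + \beta K n$ via the identity above, and using a high-probability bound $\sum_t\sum_i p_i(t)\hat{x}_i(t)^2 = \bigO{nK}$ obtained the same way, converts the regret-in-estimates into a genuine bound of the schematic form
\[ R_\ada(n) \;\le\; \frac{\log K}{\eta} + \frac{\log(K/\delta)}{\beta} + \bigO{\eta nK + \gamma n + \beta K n} . \]
Choosing $\eta$ and $\beta$ of order $\sqrt{\log(K/\delta)/(nK)}$ and $\gamma$ of order $\sqrt{K\log(K/\delta)/n}$ balances all terms at $\sqrt{nK\log(K/\delta)}$, giving $R_\ada(n) = \bigO{\sqrt{nK\log(K/\delta)}}$ with probability $1-\delta$, as claimed.
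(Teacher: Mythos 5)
This theorem is imported, not proved, in the paper --- it is quoted verbatim from \cite[Theorem 3.3]{BC12} --- and your reconstruction is precisely the standard \expp\ analysis that the citation points to: the biased importance-weighted estimates $\hat{x}_i(t)=(x_i(t)\II{I_t=i}+\beta)/p_i(t)$, the exponential-weights potential bound made valid by the exploration floor $\gamma/K$, the supermartingale concentration step $\bE[\exp(\beta(x_i(t)-\hat{x}_i(t)))\mid\mathcal{F}_{t-1}]\le 1$ (which indeed holds for $\beta\le 1$, satisfied by your tuning), and the parameter balancing that yields $\bigO{\sqrt{nK\log(K/\delta)}}$ against an adaptive adversary. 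Your proposal is correct; the only cosmetic deviation from the cited proof is that it controls the second-order term $\sum_t\sum_i p_i(t)\hat{x}_i(t)^2$ by a separate high-probability bound, where the standard argument folds it deterministically into the comparator via $\sum_i \sum_t \hat{x}_i(t) \le K\max_j \sum_t \hat{x}_j(t)$ --- both routes give the stated rate.
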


\subsection{Results}

First, we state our lower bounds for oblivious and adaptive adversaries.
\begin{theorem} \label{theorem_lower_bound}
	Let $\alpha < 1$, $\epsilon > 0$, $\beta < 2$, and $\clower > 0$. Consider a player that achieves pseudo-regret 
	\[ \pseudoR_\sto(n) \leq \clower (\log n)^\beta \] 
	for any stochastic bandit problem with two arms and gap~$\Delta=1/8$. 
	Then for large enough $n$ there is an adversarial bandit problem with two arms and an oblivious adversary such that the player suffers regret 
	\[ R_\obl(n) \geq n^\alpha/8 - 4\sqrt{n \log n} \]
	with probability at least $1/(16n^{\epsilon}) - 2/n^2$. Furthermore, there is an adversarial bandit problem with two arms and an adaptive adversary such that the player suffers expected regret 
	\[ \EE{R_\ada(n)} \geq \frac{n^{\alpha-\epsilon}}{128} - 3\sqrt{n \log n} . \]  
\end{theorem}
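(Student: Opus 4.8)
The plan is to convert the scarce exploration forced by the stochastic guarantee into an inability to react to a hidden change of the best arm, and then to account for concentration and for the adaptive adversary.

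\textbf{Exploration budget.} Fix the base stochastic instance $P_0$ with $\mu_1 = 1/2$ and $\mu_2 = 1/2 - \Delta$, $\Delta = 1/8$, so that arm~1 is optimal. The hypothesis $\pseudoR_\sto(n) \le C(\log n)^\beta$ reads $\Delta\,\EE{T_2(n)} \le C(\log n)^\beta$, i.e. $\EE{T_2(n)} \le 8C(\log n)^\beta$. Thus under $P_0$ the player samples the inferior arm only $\bigO{(\log n)^\beta}$ times in expectation, so its entire evidence-gathering budget on arm~2 is tiny. Since the bound must hold for every gap-$1/8$ instance, the same is true with the roles of the arms exchanged.

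\textbf{Oblivious construction and coupling.} For a change time $\tau$ I define $P_\tau$ to coincide with $P_0$ up to $\tau$, after which arm~2's mean jumps to some $\mu_2' > \mu_1$ while arm~1 is left untouched. Choosing $\mu_2'$ and $\tau$ near the break-even point at which arm~2 just becomes the best fixed arm, a player that keeps pulling arm~1 on $(\tau,n]$ incurs regret $(\mu_2'-\mu_1)(n-\tau) - \Delta\tau \ge n^\alpha/8$ against arm~2, whereas a player that switches promptly profits from the boost and attains negative regret. Because $P_0$ and $P_\tau$ differ only in arm~2's post-$\tau$ rewards, I couple the two environments so that the history, and hence the actions, agree until the first pull of arm~2 after $\tau$; consequently the probability of never reacting, i.e. of pulling arm~1 throughout the relevant delay window under $P_\tau$, equals the probability of not sampling arm~2 in that window under $P_0$.

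\textbf{The crux: lower-bounding the miss probability.} The obstacle is that an expectation bound on $T_2$ only \emph{upper}-bounds the exploration probability, so no single change time can by itself force a large miss probability; moreover the break-even structure lets the player still profit from slow reactions, so a genuine $n^\alpha$ regret requires a failure sustained over an order-$n$ window. I therefore place many candidate change times and use that each detection rests on several arm-2 samples forming a weak test of constant false-negative rate. Driving the miss probability below $n^{-\epsilon}$ simultaneously for all candidates would force the per-window sampling rate, and thus the total count $\EE{T_2(n)}$, above $8C(\log n)^\beta$ — a contradiction. A pigeonhole/averaging step over the candidates then isolates one change time with miss probability at least $\approx 1/(16n^\epsilon)$ while the induced regret is still $\ge n^\alpha/8$. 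This balance is exactly where $\beta < 2$ is used: for $\beta \ge 2$ the budget already suffices for reliable testing, consistent with the $\bigO{(\log n)^2}$/$\sqrt{n}$ trade-off of~\citep{BS12}. I expect this quantitative balancing — guaranteeing miss probability $\gtrsim n^{-\epsilon}$ and regret $\gtrsim n^\alpha$ against the $(\log n)^\beta$ budget through the weak-testing argument — to be the hard part.

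\textbf{Concentration and the adaptive adversary.} The means-level regret is turned into realized regret by Hoeffding/Azuma bounds on the best arm's and the player's realized rewards, which cost the $4\sqrt{n\log n}$ correction and a $2/n^2$ failure probability subtracted from $1/(16n^\epsilon)$. For the adaptive bound I keep the same instance but let the adversary revert arm~2 to its inferior mean as soon as the player begins exploiting it, removing the negative-regret escape; then the expected regret is at least (miss probability) times (regret on a miss) minus fluctuations, giving $\EE{R_\ada(n)} \gtrsim n^{\alpha-\epsilon}/128 - 3\sqrt{n\log n}$.
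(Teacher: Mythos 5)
You have the right architecture, and it is essentially the paper's: partition time into many candidate change windows, pigeonhole the $\clower(\log n)^\beta$ budget to find one window with expected arm-2 count $B = O((\log n)^{\beta-1})$, flip arm 2's mean to $1/2+\Delta$ from that window on, pay $O(\sqrt{n\log n})$ for Azuma, and for the adaptive bound revert arm 2 as soon as the player exceeds the budget (the paper's trigger is the precise event $T_2^{j^*}(t) > 4B$). The paper additionally uses geometrically growing phases $L_j = 3^j\lfloor n^\alpha\rfloor$ so that $\sum_{j<j^*}L_j \le L_{j^*}/2$, which is what kills the negative-regret cancellation; your alternative device of tuning $\mu_2'$ to a break-even point is never carried through and, as you yourself observe, would require the miss to be sustained over an order-$n$ window, a strictly harder event than the paper needs (it only needs the miss to last through phase $j^*$, since after the flip the player can at best match arm 2). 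But the genuine gap is exactly where you write "I expect this to be the hard part": the lower bound on the miss probability is asserted, not proved, and the two mechanisms you offer for it do not work. First, the coupling as you state it --- histories agree until the first pull of arm 2 after $\tau$, so the miss probability equals the probability of \emph{zero} arm-2 samples in the window under $P_0$ --- is too weak: for $1 < \beta < 2$ the per-window budget $(\log n)^{\beta-1}$ diverges, so the player may deterministically sample arm 2 in every window, the zero-sample event has probability $0$ in all windows, and no pigeonhole over candidates repairs this. Second, the claim that detection "rests on several arm-2 samples forming a weak test of constant false-negative rate" is not something you may assume about an arbitrary player; a player can run a strong test on its $\Theta((\log n)^{\beta-1})$ samples whose false-negative rate is $\exp\{-\Omega((\log n)^{\beta-1})\}$, not constant.

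What closes this gap in the paper is a change-of-measure argument (Lemma~\ref{l:lower}) that tolerates up to $4B$ samples rather than requiring zero: under the stochastic measure, $\Psto{T_2^{j^*} \le 4B} > 3/4$ by Markov, and conditioned on the samples, the arm-2 reward sum exceeds $\lfloor T_2^{j^*}(1/2-\Delta)\rfloor$ with probability at least $1/2$; on the intersection the likelihood ratio between the modified and original measures is at least
\[
\left(\frac{1-2\Delta}{1+2\Delta}\right)^{8\Delta B+2} \ge \frac{1}{16}\,e^{-64\Delta^2 B},
\]
whence $\Padv{T_2^{j^*} \le 4B} \ge \frac{1}{16}e^{-64\Delta^2 B} \ge 1/(16 n^\epsilon)$ precisely because $B = O((\log n)^{\beta-1})$ and $\beta < 2$ (the paper's condition $(\log n)^{2-\beta} \ge \frac{64\clower\log 3}{(1-\alpha)\epsilon}$). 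This likelihood-ratio computation is the entire source of the constant $1/(16n^\epsilon)$ and the only place $\beta<2$ enters; your proposal has the surrounding arithmetic and the adaptive-adversary reduction right, but without this step (or an equivalent KL argument) the proof is incomplete at its core.
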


In Section~\ref{s:sao} we present our \sapo~algorithm (Stochastic and Adversarial Pseudo-Optimal) that achieves optimal pseudo-regret in stochastic bandit problems and nearly optimal pseudo-regret in adversarial bandit problems. Its performance is summarized in the following theorem.
\begin{theorem}\label{thm:main}
	For large enough $n$ and any $\delta > 0$, algorithm \sapo\ achieves the following bounds for suitable constants $C_\sto$, $C_\adv$, and $\Cib$:
	\begin{itemize} 
	\item For stochastic bandit problems with gaps $\Delta_i$ 
	such that\\ 
	$\Cib\sum_{i:\Delta_i > 0} \frac{\log (n/\delta)}{\Delta_i} \leq \sqrt{nK \log (n/\delta)}$, 
	\[ T_i(n) \leq C_\sto \frac{\log (n/\delta)}{\Delta_i^2} \]
	with probability $1-\delta$ for any arm $i$ with $\Delta_i > 0$, and thus
	\[ \pseudoR_\sto(n) \leq C_\sto \sum_{i:\Delta_i > 0} \frac{\log (n/\delta)}{\Delta_i} + \delta n . \]
	\item For adversarial bandit problems 
	\[ \pseudoR_\ada(n) \leq C_\adv K \sqrt{n \log (n/\delta)} + \delta n . \]
	\end{itemize}
\end{theorem}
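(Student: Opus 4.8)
The plan is to treat the two bounds separately but to anchor both analyses on the behavior of the weak testing scheme, since the whole difficulty is that \sapo\ must simultaneously explore little enough to keep stochastic pseudo-regret at $\bigO{\log n}$ and react fast enough to non-stochastic behavior to keep adversarial pseudo-regret at $\tO{\sqrt n}$. I would begin by fixing the standard ``clean'' event on which all empirical means concentrate within their confidence radii: a union bound over the $K$ arms and $n$ rounds, using Hoeffding-type concentration, shows this event holds with probability $1-\bigO{\delta}$, and off this event the contribution to any expected-regret quantity is at most $\delta n$, which accounts for the additive $\delta n$ terms in both bounds.

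For the stochastic bound I would first show that on the clean event the algorithm almost never leaves ``stochastic mode.'' Each individual weak test has only a constant false-positive probability, but classifying an arm as non-stochastic requires $\bigO{\log n}$ positive tests; under genuinely stochastic rewards the tests behave like independent trials with small success probability on the clean event, so the chance that any fixed arm accumulates $\bigO{\log n}$ false positives is $\bigO{1/n}$, and a union bound over arms keeps the total misclassification probability at $\bigO{\delta}$. Conditioned on never misclassifying, \sapo\ behaves like a UCB algorithm, and the standard argument applies: a suboptimal arm $i$ is pulled only while its upper confidence bound exceeds the best arm's lower confidence bound, which can persist only until $T_i(n)$ reaches order $\log(n/\delta)/\Delta_i^2$, giving the per-arm bound $T_i(n) \le C_\sto \log(n/\delta)/\Delta_i^2$. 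Summing $\Delta_i T_i(n)$ over suboptimal arms and adding the $\delta n$ failure term yields $\pseudoR_\sto(n) \le C_\sto \sum_{i:\Delta_i>0}\log(n/\delta)/\Delta_i + \delta n$. The gap hypothesis $\Cib \sum_i \log(n/\delta)/\Delta_i \le \sqrt{nK\log(n/\delta)}$ is exactly what is needed to guarantee that the exploration budget \sapo\ reserves for its adversarial safeguard is not exhausted before the confidence intervals separate the arms, so that the stochastic regime remains operative throughout.

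For the adversarial bound I would split on whether \sapo\ ever switches into the adversarial (\expp) subroutine. If detection never fires, the confidence-interval machinery still forces the player's estimate to track the best arm closely, so the accrued pseudo-regret is $\tO{\sqrt n}$; if detection does fire, Theorem~\ref{t:EXP3} bounds the post-switch regret by $\bigO{\sqrt{nK\log(K/\delta)}}$, which is absorbed into $\bigO{K\sqrt{n\log(n/\delta)}}$. The delay inherent in requiring $\bigO{\log n}$ positive tests before a switch must be charged to the regret, but each pre-switch window is short relative to $\sqrt n$, so the accumulated detection delay is of lower order. The decisive step is the explicit negative-regret compensation: I would write $\pseudoR_\ada(n)=\EE{R(n)}$ as a sum over the (random, data-dependent) phases and show that the rare bad events --- a missed change followed by large positive regret --- are offset in expectation by the events on which \sapo\ correctly detects the change, plays the old best arm early and the new best arm late, and thereby earns negative regret. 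Bounding the net positive contribution by $C_\adv K\sqrt{n\log(n/\delta)}$ completes the proof.

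I expect the adversarial case, and within it the negative-regret bookkeeping, to be the main obstacle. The difficulty is that the number of detection phases is itself random and adversary-controlled, so the union bounds and the pairing of positive- with negative-regret events must be carried out over a random stopping structure rather than a fixed horizon; ensuring that the delayed-detection cost summed across all phases, together with the uncompensated positive-regret tail, stays within the single factor $K\sqrt{n\log(n/\delta)}$ --- rather than blowing up by an extra $\log n$ as in the earlier SAO analysis --- is where the constants $C_\adv$ and $\Cib$ have to be chosen with care.
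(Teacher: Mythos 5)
Your skeleton (clean event plus $\delta n$; split on the switch to \expp; repeat a constant-error weak test $\Theta(\log n)$ times) matches the paper, but two of your central steps are wrong as stated. First, the identity $\pseudoR_\ada(n)=\EE{R(n)}$ that you propose to ``write out over phases'' is false, and fatally so: pseudo-regret is the maximum of expectations, not the expectation of the maximum, and by the paper's own Theorem~\ref{theorem_lower_bound} no algorithm with $\bigO{\log n}$ stochastic pseudo-regret can achieve $\tO{\sqrt n}$ \emph{expected} adversarial regret, so any argument bounding $\EE{\max_i(\cdot)}$ must fail. The paper instead fixes the comparator arm $i$ \emph{before} taking expectations and decomposes $\sum_{t=1}^{\ns}[x_i(t)-x_{I_t}(t)]$ through $\lcb^*(t)$ into three sums, as in~(\ref{eq:adv-Ri}): the active period of arm $i$ (bounded by $\bigO{\sqrt{Kn\log(n/\delta)}}$ via the unbiased estimates $\bmu_i$ and Step~1.a), the period with $i\in\cB$, and the observable deficit $\sum_t[\lcb^*(t)-x_{I_t}(t)]$, which is capped deterministically by the Step~1.b test. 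Your compensation mechanism --- pairing rare missed-change events against events where the player detects the change and outperforms both arms --- is the heuristic from the introduction, not the proof, and cannot be executed per fixed comparator. The actual compensation is deterministic within each testing phase: Lemma~\ref{l:prelim}(\ref{item:emu-lcb}) gives $\emu_i+\egp_i<\lcb^*(t)$, so every step of a phase earns $-\egp_i$ against arm $i$ unless the excess $D_i$ over $\emu_i$ grows by $\Omega(\egp_i L_{ik})$, which Step~4.a catches up to a constant false-negative rate $\qadv\le 1/25$ (Lemma~\ref{l:qadv}); the halving/doubling of phase lengths is then tamed by a backward induction with the potential $\Phi_i(k,L)=L\egp_i/2+3L_i^0\egp_i(M-k+1)$ over at most $M=\lceil\log_2 n\rceil+2E^0$ phases (Lemmas~\ref{l:prelim}(\ref{item:numPhases}) and~\ref{l:recursion}), yielding $\bigO{K\log(n/\delta)/\egp_i}=\bigO{K\sqrt{n\log(n/\delta)}}$. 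Your claim that ``each pre-switch window is short relative to $\sqrt n$'' is untrue without this machinery: phase lengths double up to order $n$, and only the potential argument shows that a long phase's worst case has already been paid for by negative regret in the preceding doubling phases.

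On the stochastic side, ``conditioned on never misclassifying, \sapo\ behaves like a UCB algorithm'' overlooks that \sapo\ is an elimination scheme that keeps playing evicted arms indefinitely for testing, with probability $L_i^0/(K L_i(t))$; no standard UCB argument bounds these post-eviction plays. You need Lemma~\ref{l:prelim}(\ref{item:TiB}) (Bernstein) to bound them by roughly $L_i^0 M/K$, and --- crucially --- a \emph{lower} bound $\egp_i\ge\Delta_i/2$ on the frozen gap estimate (Lemma~\ref{l:Delta>}, which itself rests on Lemma~\ref{l:diffTi}, that the optimal arm is played at least a quarter as often as $i$) in order to convert $L_i^0=\lceil\Cpp K/\egp_i^2\rceil$ into $\bigO{\log(n/\delta)/\Delta_i^2}$; without control of $\egp_i$ from below, the testing plays are unbounded in terms of $\Delta_i$. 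Two smaller misreadings: the hypothesis $\Cib\sum_{i:\Delta_i>0}\log(n/\delta)/\Delta_i\le\sqrt{nK\log(n/\delta)}$ is used to guarantee that legitimate stochastic regret never trips the Step~1.b deficit test (Lemma~\ref{l:step1b}), not to protect an ``exploration budget''; and the false positives are not ``independent trials with small success probability'' --- the per-phase rate is the constant $\qsto\le 0.21$ (Lemma~\ref{l:qs}), and obtaining probability $\bigO{\delta/n}$ for $E^0=\lceil\ci\log(n/\delta)\rceil$ detections out of at most $M$ phases requires the relative-entropy binomial tail bound with $E^0/M$ bounded below, which is precisely why $E^0$ and $M$ must be tuned jointly.
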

\begin{remark}
	Our bound for adversarial bandit problems shows a worse dependency on $K$ than Theorem~\ref{t:EXP3}. This is an artifact of our current analysis and can be improved to a bound 
	$\pseudoR_\ada(n) = \bigO{\sqrt{nK\log(n/\delta)}}$. 
\end{remark}

\subsection{Comparison with related work}

\cite{BS12} show for their SAO algorithm that with probability $1-\delta$, 
\[ \sum_{i=1}^K \Delta_i T_i(n) \leq \bigO{\frac{K \log K (\log n/\delta)^2}{\Delta}} \]
for stochastic bandits where $\Delta = \min_{i:\Delta_i > 0}\Delta_i$, and 
\[ R_\ada(n) \leq \bigO{(\log K)(\log n)\sqrt{nK\log n/\delta}} \]
for adaptive adversarial bandits.
While our bounds in Theorem~\ref{thm:main} are somewhat tighter, in particular showing the optimal dependency on the gaps~$\Delta_i$ for stochastic bandits, we have only a result on the pseudo-regret for adversarial bandits. We conjecture though, that our analysis can be used to construct an algorithm that with probability $1-\delta$ achieves 
$T_i(n) \leq \bigO{{(\log n/\delta)^2}/{\Delta_i^2}}$ for stochastic bandits and 
$R_\ada(n) \leq \bigO{(\log K)(\log n)\sqrt{nK\log n/\delta}}$ for adaptive adversarial bandits.

Our \sapo\ algorithm follows the general strategy of the SAO algorithm by essentially employing an algorithm for stochastic bandit problems that is equipped with additional tests to detect non-stochastic arms. 
A different approach is taken in~\citep{SS14}: here the starting point is an algorithm for adversarial bandit problems that is modified by adding an additional exploration parameter to achieve also low pseudo-regret in stochastic bandit problems. While this approach has not yet allowed for the tight $\bigO{\log n}$ regret bound in stochastic bandit problems (they achieve a $\bigO{\log^3 n}$ bound), the approach is quite flexible and more generally applicable than the SAO and \sapo\ algorithms.

\subsection{Proof sketch of the lower bound (Theorem~\ref{theorem_lower_bound})}

We present here the main idea of the proof. The proof itself is given in Appendix~\ref{app:fullproof-lower}.

We consider a stochastic bandit problem with constant reward $x_1(t)=1/2$ for arm~1 and Bernoulli rewards with $\mu_2=1/2-\Delta$ for arm~2, $\Delta=1/8$. We divide the time steps into phases of increasing length $L_j=3^j n^{\alpha}$, $j=0,\ldots,J$ with $J = \Omega(\log n)$. Since the pseudo-regret of the player is $\bigO{(\log n)^\beta}$, there is a phase $j^*$ where the expected number of plays of arm~2 in this phase is  $\bigO{(\log n)^{\beta-1}}$.

We construct an oblivious adversarial bandit by modifying the Bernoulli distribution of arm~2 in phase~$j^*$ and beyond by setting $\mu_2=1/2+\Delta$. By this modification arm~2 gives larger total reward than arm~1.

Because of the limited number of plays in phase~$j^*$, a standard argument shows that the player will not detect this modification during phase~$j^*$ with probability 
$\exp\{-O(\log^{\beta-1}\T)\}=\Omega(n^{-\epsilon})$. When the modification is not detected during phase~$j^*$, then in this phase the player suffers roughly regret $\Delta L_{j^*}$ against arm~2. This is not compensated by negative regret against arm~2 in previous phases since 
$\Delta \sum_{j=0}^{j^*-1}L_j \leq \Delta L_{j^*}/2$. Thus in this case the overall regret of the player against arm~2 is roughly $\Delta L_{j^*}/2 = \Omega(n^\alpha)$.

In a very similar way we can construct also an adaptive adversarial bandit: As for the oblivious bandit, we set $\mu_2 = 1/2+\Delta$ in phase~$j^*$. If the player chooses arm~2 only  $C(\log n)^{\beta-1}$ times in phase~$j^*$, then we keep $\mu_2 = 1/2+\Delta$ also for the remaining phases. As for the oblivious bandit this happens with probability $\Omega(n^{-\epsilon})$ and gives regret $\Omega(n^\alpha)$. To avoid negative regret, we switch back to $\mu_2=1/2-\Delta$, as soon as there more than $C(\log n)^{\beta-1}$ plays of arm~2 in phase~$j^*$. In this case the reward of the algorithm is roughly $n/2 + C\Delta(\log n)^{\beta-1}$ such that in this case 
$R(n) \geq -C\Delta(\log n)^{\beta-1}$. Hence the expected regret is 
$\EE{R(n)} \geq \Omega(n^{\alpha-\epsilon}) - C\Delta(\log n)^{\beta-1} = \Omega(n^{\alpha-\epsilon})$.

\section{The \sapo\  algorithm} 
\label{s:sao}

\begin{algorithm}[t]
		\caption{: \sapo\label{algo_K}}
		\textbf{Input:} Number of arms $K$, number of rounds $n \geq K$, and confidence parameter $\delta$. 
		
		\medskip
		\textbf{Initialization:} All arms are active, $\cA(0) = \{1,\ldots,K\}$, 
		         $\cB(0) = \emptyset$.
	    
		\medskip
		\textbf{For} $t=1,\ldots,n$:
		\begin{enumerate}
			
			\item 
			\begin{enumerate}
				\item If there is an arm $i \in \cA(t-1)$ with $\bmu_i(t-1) \not\in [\blcb_i(t-1),\bucb_i(t-1)]$, \\then switch to \expp.
				\item If $\sum_{s=1}^{t-1} [\lcb^*(s) - x_{I_s}(s)] > \Cib\sqrt{Kn\log (n/\delta)}$, then switch to Exp3.P.
			\end{enumerate}
			
			\item Evict arms from $\cA$:
			\begin{enumerate}
				\item Let $B(t) = \{i \in \cA(t-1): \begin{array}[t]{l}
							  T_i(t-1) \geq \Ci\cdot\log(n/\delta) \;\wedge \\
				              \hmu_i(t-1) + \Cgap\cdot\width_i(t-1) < \lcb^*(t-1)\},
				              \end{array}$\\[0.5ex]
				   $\cA(t) = \cA(t-1) \setminus B(t)$, $\cB(t) = \cB(t-1) \cup B(t)$.
				\item For all $i \in B(t)$ set $\emu_i = \hmu_i(t-1)$,
					$\egp_i  = \Cgap\cdot\width_i(t-1)$, 
					\\
					$n_i(t)=t$, 
					$L_i(t) = L_i^0:= \lceil\Cpp K/\egp_i^2\rceil$, and $E_i(t)=0$.
			\end{enumerate}
			
			\item Choose $I_t=i$ with probabilities 
			\begin{equation*}
			p_i(t)= \left\{
			\begin{array}{cl}
			L_i^0/(K L_i(t)) & \textrm{for } i\in \cB(t) \\
			\spa{1-\sum_{j\in \cB(t)}p_j(t)} \Big/ |\cA(t)| & \textrm{for } i\in \cA(t)
			\end{array}
			\right.
			\end{equation*}
			
			\item Test and update all arms $i \in \cB(t)$: 
			\begin{enumerate}
				\item If $\exists s:n_i(t) \leq s \leq t: \hD_i(s,t) \geq \Caa \egp_i L_{i}(t) p_{i}(t)$, \label{step:detection}
				\item then $n_i(t+1)=t+1$, $L_i(t+1) = \max\{L_i(t)/2,L_i^0\}$,
				\\ \mbox{}\hphantom{then} and $E_i(t+1)=E_i(t)+1$,
				\item \mbox{}\hphantom{then} if $E_{i}(t+1) =E^0:=\lceil\ci\cdot\log (n/\delta)\rceil$, then switch to \textsc{Exp3.P};	
				\item else if $t=n_i(t)+L_i(t)-1$ then $n_i(t+1)=t+1$, $L_i(t+1)=2L_i(t)$, 
				\\\mbox{}\hphantom{then} and $E_i(t+1)=E_i(t)$;
				\item else $n_i(t+1)=n_i(t)$, $L_i(t+1) = L_i(t)$, and $E_i(t+1)=E_i(t)$.
			\end{enumerate}	
		
		\end{enumerate}
	
	\end{algorithm}

In its core the algorithm is an elimination procedure for stochastic bandits that is augmented by tests safeguarding against non-stochastic arms. If there is sufficient evidence for non-stochastic arms, then the algorithm switches to the adversarial bandit algorithm \expp, starting with the current time step.

The algorithm maintains a set of active arms~$\cA$ and a set of supposedly suboptimal ``bad'' arms~$\cB$. For each arm~$i$ it maintains the sample mean~$\hmu_i(s)$, 
\begin{align*}
	\hmu_i(s) &= \frac{1}{T_i(s)}\sum_{t=1}^{s} x_i(t) \II{I_t=i} , \\
	T_i(s) &= \sum_{t=1}^{s} \II{I_t=i} ,
\end{align*}
and also an unbiased estimate to deal with non-stochastic arms,
\begin{align*}
	\bmu_i(s) &= \frac{1}{s}\sum_{t=1}^{s} x_i(t) \frac{\II{I_t=i}}{p_i(t)} ,
\end{align*}
where $p_i(t)$ is the probability of choosing arm~$i$ at time~$t$. 
Confidence bounds\footnote{%
	We start with $\lcb_i(0)=\blcb_i(0)=0$ and $\bucb_i(0)=1$.} 
around the estimated means are used to evict arms from the active set~$\cA$, 
\begin{align*}
	\lcb_i(s) &= \max\{\lcb_i(s-1),\hmu_i(s) - \width_i(s)\}, \\
	\blcb_i(s) &= \max\{\blcb_i(s-1),\bmu_i(s)-\bwidth(s)\}, \\
	\bucb_i(s) &= \min\{\bucb_i(s-1),\bmu_i(s)+\bwidth(s)\}, \\  
	\lcb^*(s) &= \max_{1 \leq i \leq K} \max\{\lcb_i(s),\blcb_i(s)\}, \\
	\width_i(s) &= \sqrt{\Cwid \log (n/\delta)/T_i(s)} ,\\
	\bwidth(s) &= \sqrt{\Cwid K\log (n/\delta)/s} .
\end{align*}
Note that $\lcb_i(s)$, $\blcb_i(s)$, and $\lcb^*(s)$ are non-decreasing and $\bucb_i(s)$ are non-increasing. This reflects the intuition that confidence intervals should be shrinking and is used to safeguard against non-stochastic arms. 

An arm~$i$ is evicted from~$\cA$ in {\bf Step~2.a}, if 
it has a sufficient number of plays ($\Ci\cdot\log (n/\delta)$) for reasonably accurate estimates, and if 
its sample mean~$\hmu_i(t-1)$ is significantly smaller than the optimal lower confidence bound~$\lcb^*(t-1)$.
The additional distance $\Cgap\cdot\width_i(t-1)$ is used to estimate the gap~$\Delta_i$. For evicted arms, in {\bf Step 2.b} an estimate for the gap $\egp_i$ and the current estimated mean are frozen, $\emu_i=\hmu_i(t-1)$. For stochastic bandits the accuracy of this estimate is proportional to the estimated gap~$\egp_i$. These quantities are used in the tests for detecting non-stochastic arms. Also the starting time~$n_i(t)$ and the length~$L_i(t)=L_i^0$ of the first testing phase (see below), as well as the number of detections $E_i(t)=0$ are set. 

Since \sapo\ needs to perform well also against adversaries, all choices of arms are randomized.
In {\bf Step 3} an active arm is chosen uniformly at random, or with some smaller probability%
a bad arm~$i$ is chosen where the probability depends on the length of its current testing phase~$L_i(t)$. 
Choosing also bad arms is necessary to detect non-stochastic arms among the bad arms.

\subsection{Tests for detecting non-stochastic arms}

The most important test is in {Step 4.a} for detecting that a bad arm receives larger rewards than it should if it were stochastic. Such an arm could be optimal if the bandit problem is adversarial. The best way to view this test is by dividing the time steps of an evicted arm~$i$ into testing phases 
\[
   \tau_{i,1},\ldots,\tau_{i,2}-1;
   \tau_{i,2},\ldots,\tau_{i,3}-1;
   \tau_{i,3},\ldots,\tau_{i,4}-1;\ldots
\]
The first phase starts when arm $i$ is evicted from $\cA$. A phase~$k$ ends at time $\tau_{i,k+1}-1$ if either the phase has exhausted its length (Step~4.d), or when the test in Step~4.a reports a detection.\footnote{%
	The last phase ends when the total number of time steps~$n$ is exhausted or when the algorithm switches to \expp.} 
Thus the length parameter~$L_i(t)$ is only the maximal length of a phase and the phase may end earlier. In the notation of the algorithm $n_i(t)$ denotes the start of the current phase. Within a phase the probability $p_i(t)$ for choosing arm~$i$ is constant since the length parameter~$L_i(t)$ does not change (Step 4.e). For notational convenience we denote by $p_{ik}$ the probability for choosing arm~$i$ in its $k$-th testing phase, and by $L_{ik}$ the corresponding length parameter,
\begin{align*}
	 p_i(t) &= p_{ik} \text{~~for $i \in \cB(t)$ and $\tau_{i,k} \leq t < \tau_{i,k+1}$},
	 \\
	 L_i(t) &= L_{ik} \text{~~for $i \in \cB(t)$ and $\tau_{i,k} \leq t < \tau_{i,k+1}$},
	 \\
	 n_i(t) &= \tau_{i,k} \text{~~for $i \in \cB(t)$ and $\tau_{i,k} \leq t < \tau_{i,k+1}$}.
\end{align*}

Now the test in {\bf Step 4.a} checks if a bad arm $i$ has received significantly more rewards in the current phase then expected, given the estimated mean~$\emu_i$, the maximal phase length~$L_i(t)$ and the probability for choosing arm~$i$, $p_i(t)$, where
\[ \hD_i(s_1,s_2) = \sum_{t=s_1}^{s_2} [x_i(t) - \emu_i] \II{I_t=i} .
\] 
If arm $i$ is stochastic, then  $\EE{\hD_i(s_1,s_2)} =\bigO{L_i(t) \egp_i p_i(t)}$ such that a positive test suggests that the arm is non-stochastic. Since the expected number of plays of arm~$i$ is $L_i^0/K$ in each phase, the test is weak, though, with constant false positive and false negative rates. To avoid incorrectly classifying a stochastic arm as non-stochastic, the test is repeated several times. To make the tests independent, a new phase is started in {\bf Step~4.b} after a detection is reported. To avoid that too much regret accumulates in the case of a non-stochastic arm, the phase length is halved. If there have been~$E^0$  independent detections, then in {\bf Step~4.c} there is sufficient evidence for a non-stochastic arm and the algorithm switches to \expp. 

In {\bf Step~4.d} the phase ends because it has exhausted its length. Since the test in Step~4.a has given no detection, arm~$i$ has performed as expected and the algorithm has accumulated negative regret against this bad arm. This negative regret allows to start the next phase with a doubled phase length, even if the arm were non-stochastic. Doubling the phase length is necessary to avoid too many phases for a stochastic arm. (Remember that the expected number of plays of a bad arm is $L_i^0/K$ in each phase.)

In {\bf Step 4.e} none of the above condition is satisfied and the phase continues.

Additional simpler tests for non-stochastic arms are performed in Step~1. {\bf Step~1.a} checks whether for all active arms the unbiased estimates of the means obey the corresponding confidence intervals. 
Finally, {\bf Step 1.b} checks if the algorithm receives significantly less reward than expected from the best lower confidence bound. This may happen if a non-stochastic arm first appears close to optimal but then receives less rewards.

\subsection{Choice of constants in the \sapo\ algorithm}
\label{s:constants}

In the algorithm we keep the constant names because we find them easier to read than actual values. Proper values for the constants are as follows: $\Cwid=16$, $\Cib=522$, $\Ci=100/9$, $\Cgap=60$, $\Cpp=1300$, $\Caa=1/10$, and $\ci=15$.


\section{Preliminaries for the analysis of \sapo}\label{sec_pre}

An important tool for our analysis are concentration inequalities, in particular Bernstein's inequality for martingales and a variant of Hoeffding-Azuma's inequality for the maximum of partial sums, $\max_{1\leq s \leq t \leq n} \sum_{i=s}^t Y_i$. These inequalities are given in Appendix~\ref{app:concentration}.
We denote by $\hist_t$ the past up to and including time~$t$. 

The next lemma states some properties of algorithm \sapo. 
Let
 \[ T_i(s_1,s_2) = \#\{t:s_1 \leq t \leq s_2: I_t=i\} \]
 denote the number of plays of arm $i$ in time steps $s_1$ to $s_2$, let $\nbi$ be the time when arm~$i$ is evicted from~$\cA$,
 \[ i \in \cA(\nbi-1) \text{~~and~~} i \in \cB(\nbi), \]
 and let $\ns$ be the time step when \sapo\ switches to \expp.
 If \sapo\ never switches to \expp, then $\ns=n$.
\begin{lemma} \label{l:prelim}
	\begin{enumerate}[label={\rm(\alph*)},ref=\alph*,nosep]
		\item \label{item:emu-lcb} If $i \in \cB(t)$ then $\emu_i + \egp_i < \lcb^*(t)$.
		\item \label{item:numPhases}
		For each arm the number of testing phases $k$, $\tau_{i,k}\cdots\tau_{i,k+1}-1$  is \\at most  
		$\numPH = \lceil \log_2 n\rceil + 2E^0$.
		\item \label{item:TiB}
		With probability $1-\bigO{\delta}$, the number of plays of any bad arm $i$ is bounded as\\ 
		$T_i(\nbi,\ns) \leq \frac{101}{100} L_i^0\numPH/K = \bigO{\numPH/\egp_i^2}$.
	\end{enumerate}
\end{lemma}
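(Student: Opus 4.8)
The three parts build on one another, so I would prove them in the stated order.

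\textbf{Part (a)} follows from the eviction rule together with the monotonicity of $\lcb^*$. When arm $i$ is evicted at time $\nbi$ (Step~2.a), the eviction condition reads $\hmu_i(\nbi-1)+\Cgap\cdot\width_i(\nbi-1) < \lcb^*(\nbi-1)$, and Step~2.b freezes $\emu_i=\hmu_i(\nbi-1)$ and $\egp_i=\Cgap\cdot\width_i(\nbi-1)$, so $\emu_i+\egp_i < \lcb^*(\nbi-1)$. Since $\lcb^*(s)$ is non-decreasing and $t\ge\nbi$ whenever $i\in\cB(t)$, we get $\emu_i+\egp_i<\lcb^*(\nbi-1)\le\lcb^*(t)$. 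I would also record the consequence used later: as $\lcb^*(t)\le 1$ and $\emu_i\ge 0$, this yields $\egp_i<1$, hence $L_i^0=\lceil\Cpp K/\egp_i^2\rceil\ge\Cpp K$.

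\textbf{Part (b)} I would prove by tracking the phase-length exponent $\phi_k:=\log_2(L_{ik}/L_i^0)\ge 0$ as a potential. A phase ends either by a detection (Step~4.a--b), which replaces $L_{ik}$ by $\max\{L_{ik}/2,L_i^0\}$ and hence decreases $\phi$ by at most one, or by exhausting its length (Step~4.d), which doubles $L_{ik}$ and increases $\phi$ by one. The number of detections is at most $E^0$, since the $E^0$-th detection triggers the switch to \expp\ (Step~4.c); so there are at most $E^0$ halving phases. Any phase that runs to completion occupies at most $n$ steps, so the largest length reached satisfies $L_{ik}\le n$, i.e.\ the net number of doublings is at most $\lceil\log_2 n\rceil$; since each halving can be undone by at most one extra doubling, the number of doubling phases is at most $\lceil\log_2 n\rceil+E^0$. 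Summing the two types gives at most $\lceil\log_2 n\rceil+2E^0=\numPH$ phases.

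\textbf{Part (c)} is then a martingale concentration argument. In its $k$-th testing phase arm $i$ is played with the fixed probability $p_{ik}=L_i^0/(KL_{ik})$, so the expected number of plays there is at most $L_{ik}p_{ik}=L_i^0/K$; with at most $\numPH$ phases (part~(b)), $\sum_{t=\nbi}^{\ns}p_i(t)\le\numPH L_i^0/K=:V$ holds deterministically. Considering the martingale differences $Z_t=\II{I_t=i}-p_i(t)$ over the testing interval (the $p_i(t)$ are fixed in Step~3 before the reward, hence predictable), the summed conditional variance is $\sum_t p_i(t)(1-p_i(t))\le V$ with $|Z_t|\le 1$, and $T_i(\nbi,\ns)=\sum_t p_i(t)+\sum_t Z_t\le V+\sum_t Z_t$. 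By Bernstein's inequality for martingales (in the maximal form of Appendix~\ref{app:concentration}, to accommodate the random endpoints $\nbi,\ns$),
\[
\bP\left\{\sum_t Z_t\ge\frac{V}{100}\right\}
\le\exp\!\left(-\frac{(V/100)^2}{2V+\frac{2}{3}(V/100)}\right)
\le\exp\!\left(-\frac{V}{20000}\right),
\]
on the complementary event of which $T_i(\nbi,\ns)\le\frac{101}{100}V=\frac{101}{100}\numPH L_i^0/K$.

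The main obstacle is making this failure probability genuinely small: a $1\%$ relative deviation forces $V$ to be $\Omega(\log(n/\delta))$ with a large constant. Here the $\log_2 n$ term of $\numPH$ alone would give only polynomially small failure; the decisive point is that the $2E^0$ detection budget inside $\numPH$ supplies the variance, since $V\ge\Cpp\numPH\ge 2\Cpp E^0=\Omega(\Cpp\ci\log(n/\delta))$. With the chosen constants $\Cpp=1300$, $\ci=15$ this makes $\exp(-V/20000)\le(n/\delta)^{-\Omega(1)}$, small enough to union-bound over the $\le K\le n$ arms and conclude the claim with probability $1-\bigO{\delta}$. Verifying the predictability/measurability needed to apply the maximal inequality at the stopping times $\nbi,\ns$, and checking that the deterministic variance bound from part~(b) survives truncated phases, are the remaining technical points.
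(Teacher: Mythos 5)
Your proposal is correct and follows essentially the same route as the paper: part (a) reads off the eviction condition in Step~2 plus monotonicity of $\lcb^*$; part (b) counts at most $E^0$ halving phases and bounds the doubling phases through the time budget $n$; and part (c) applies Bernstein's martingale inequality to $\II{I_t=i}-p_i(t)$ with conditional variance at most $L_i^0\numPH/K$, using exactly the paper's key observation that the $2E^0$ component of $\numPH$ forces this variance to be at least $2\Cpp\ci\log(n/\delta)$, so that even a $1\%$ relative deviation fails with probability $\bigO{\delta/n}$, after which a union bound over arms finishes. Only trivial slips remain: in (b) your tally omits the final phase (which is neither doubled nor halved), an off-by-one the paper avoids by bounding the number of doublings by $\lceil\log_2 n\rceil+E^0-1$ via $n\ge 2^{m-E^0}-1+2E^0$; in (c) your displayed constant should be $\exp\left(-V/20067\right)$ rather than $\exp\left(-V/20000\right)$ (immaterial); and the paper handles the random endpoints $\nbi,\ns$ simply by setting $Y_t=0$ outside $[\nbi,\ns]$ (these being stopping times), rather than invoking a maximal inequality.
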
	 
\newcommand{\refP}[1]{\ref{l:prelim}\ref{item:#1}}
\begin{proof}{\bf (Sketch)} 
%
%
Statement (\ref{item:emu-lcb}) follows immediately from Step~2 of the algorithm since $\emu_i=\hmu_i(\nbi-1)$, $\egp_i=\Cgap\cdot\width_i(\nbi-1)$, $\hmu_i(\nbi-1)+\Cgap\cdot\width_i(\nbi-1) < \lcb^*(\nbi-1)$, and $\lcb^*(t)$ is non-decreasing. 
	
Statement (\ref{item:numPhases}) follows from the fact that Step~4.b (where the phase length is halved) is executed at most~$E^0$ times. In the other phases the phase length is doubled in Step~4.d. Since the phase length is at most $n$, the number of phases is at most $\log_2 n + 2E^0$.

For statement (\ref{item:TiB}) we observe that by the definition of $p_i(t)$ the expected number of plays in any testing phase of a bad arm~$i$ is $L_i^0/K$. Thus the expected number of plays in all phases is~$L_i^0 M/K$. Since the variance is bounded by the same quantity, an application of Bernstein's inequality gives the result. 

Detailed proofs are given in Appendix~\ref{app:prelim}.   
\end{proof}

\section{Analysis of \sapo\ for adversarial bandits}\label{sec_adv_analysis_K}

In this section we prove pseudo-regret bounds for \sapo\  against adversarial and possibly adaptive bandits.
Since we know from Theorem~\ref{t:EXP3} that \expp\ suffers small regret, we only need to bound the pseudo-regret of \sapo\  before it switches to \expp. 
For the remaining section we fix some arm~$i$.
We have
\begin{eqnarray}
	 \lefteqn{\sum_{t=1}^\ns x_i(t) - \sum_{t=1}^\ns x_{I_t}(t) 
	 \;=\;  {\sum_{t=1}^{\ns} \left[ x_i(t)-\lcb^*(t) \right] } 
	       +  {\sum_{t=1}^{\ns} \left[ \lcb^*(t) - x_{I_t}(t) \right] }} \nonumber  \\
	  &= &  {\sum_{t=1}^{\nbi-1} \left[ x_i(t)-\lcb^*(t) \right] } 
	       + {\sum_{t=\nbi}^{\ns} \left[ x_i(t)-\lcb^*(t) \right] } 
	         +  {\sum_{t=1}^{\ns} \left[ \lcb^*(t) - x_{I_t}(t) \right] } \label{eq:adv-Ri}
\end{eqnarray} 
The first sum in (\ref{eq:adv-Ri}) bounds the regret for the time when $i$ is an active arm. For stochastic arms, the best lower confidence bound $\lcb^*(t)$ would be not too far from the rewards of the arms that are still active. For non-stochastic arms, though, we need the tests in \sapo, in particular those in Step~1, to guarantee a similar behavior and achieve 
\begin{align}\label{eq:bound-active}
	\EE{\sum_{t=1}^{\nbi-1} \left[ x_i(t)-\lcb^*(t) \right]} 
	= \bigO{\sqrt{Kn\log (n/\delta)}} ,
\end{align}
see Appendix~\ref{app:bound-active}.

The crucial part of the analysis concerns the second sum in  (\ref{eq:adv-Ri}) which bounds the regret for the time when $i$ is a bad arm. For its analysis we explicitly track negative regret to compensate for positive regret. In Section~\ref{s:bad} below we sketch the main ideas for handling this sum (formal proofs are given in Appendix~\ref{app:bound-bad}), showing that
\begin{align}\label{eq:bound-bad}
	\EE{\sum_{t=\nbi}^{\ns} \left[ x_i(t)-\lcb^*(t) \right]} 
	= \bigO{\frac{K\log (n/\delta)}{\egp_i}} .
\end{align}
Note that $1/\egp_i=\bigO{\width_i(\nbi-1)} = \bigO{\sqrt{T_i(\nbi)/\log(n/\delta)}} 
= \bigO{\sqrt{n/\log(n/\delta)}}$ such that $\bigO{K\log (n/\delta)/\egp_i} = \bigO{K\sqrt{n\log(n/\delta)}}$.

Finally, the third sum can be observed by the algorithm and is taken care of by the test in Step~1.b, such that 
\begin{align}\label{eq:bound-1c}
	\sum_{t=1}^{\ns} \left[ \lcb^*(t) - x_{I_t}(t) \right]  
	= \bigO{\sqrt{Kn\log(n/\delta)}}.
\end{align}
Together, inequalities (\ref{eq:adv-Ri})--(\ref{eq:bound-1c}) and the bound on \expp\ in Theorem~\ref{t:EXP3} give the bound on the pseudo-regret in Theorem~\ref{thm:main}.

\subsection{Bounding the regret for bad arms}
\label{s:bad}

If a bad arm is non-stochastic, then it may first appear suboptimal but still be optimal after all. We need to show that the tests of our algorithm, in particular the test in Step~4.a, are sufficient to detect such a situation. Since the algorithm checks arms in $\cB(t)$ only rarely, it will take some time for such detection. In our analysis we explicitly compensate the regret during this delayed detection by the negative regret accumulated while arm~$i$ was performing suboptimally.  

We consider the testing phases $k$, $\tau_{i,k} \ldots \tau_{i,k+1}-1$, of arm~$i$, and recall that~$L_{ik}$ is the length parameter for phase~$k$ and $p_{ik}=L_i^0/(K L_{ik})$ is the probability for choosing arm~$i$ in phase~$k$. Furthermore, let~$E_{ik}$ the value of~$E_i(t)$ in phase~$k$.
Note that these quantities may change only when a new phase begins. 
We denote by $\PPik{\cdot}=\pr{\cdot|\hist_{\tau_{i,k}-1}}$ and $\EEik{\cdot}=\EE{\cdot|\hist_{\tau_{i,k}-1}}$ the probabilities and expectations conditioned on the past before phase~$k$. 

For any phase we have
	\begin{align} 
	\sum_{t=\tau_{i,k}}^{\tau_{i,k+1}-1} \left[ x_i(t)-\lcb^*(t) \right] 
	&= \sum_{t=\tau_{i,k}}^{\tau_{i,k+1}-1} \left[ x_i(t)-\emu_i+\emu_i-\lcb^*(t) \right] \nonumber
	\\&< \sum_{t=\tau_{i,k}}^{\tau_{i,k+1}-1} \left[ x_i(t)-\emu_i\right] - \egp_i[\tau_{i,k+1} - \tau_{i,k}]  \label{eq:negRegret}
	\end{align}  
by Lemma~\refP{emu-lcb}.
Thus we want to prevent that the rewards of arm~$i$ are significantly larger than the estimated mean~$\emu_i$. In particular, the test in Step~4.a is supposed  to detect events 
$D_i(s_1,s_2) > \CVV \egp_i L_{ik}$ with  
\[
D_i(s_1,s_2) := \sum_{t=s_1}^{s_2} [x_i(t) - \emu_i]  .  
\]
Since on average arm~$i$ is chosen only $L_i^0/K$ times per phase, there is a constant false negative rate $\qadv$ for missing such events. For appropriate $\Cpp$, though, the false negative rate $\qadv$ is sufficiently small, $\qadv \leq 1/25$: Since $\EE{\hD_i(s_1,s_2)} = p_{ik} D_i(s_1,s_2)$ for $\tau_{i,k} \leq s_1 \leq s_2 < \tau_{i,k+1}$, and Step~4.a tests for $\hD_i(s_1,s_2) > \Caa \egp_i L_{ik} p_{ik}$, we can bound $\qadv$ by Bernstein's inequality using that $1 \leq \egp_i^2 L_i^0/(K \Cpp)$ and a bound on the variance,
\[ \VV{\hD_i(s_1,s_2)} \leq L_{ik} p_{ik} = L_i^0/K \leq (\egp_i L_i^0/K)^2/\Cpp = (\egp_i L_{ik} p_{ik})^2/\Cpp  . \]  
The formal proof is given in Lemma~\ref{l:qadv}.  

We use the false negative rate $\qadv$ to bound $\EEik{D_i(\tau_{i.k},\tau_{i.k+1}-1)}$. Each time an event $D_i(s,t) > \CVV \egp_i L_{ik}$ is missed (we consider only non-overlapping such events), $D_i(\tau_{i.k},t)$ has increased by at most $\CVV \egp_i L_{ik} + 1$, and the probability for the $m$-th miss is at most $\qad{m}$. When such an event is detected, then the phase ends and  $D_i(\tau_{i.k},t)$ again has increased by at most $\CVV \egp_i L_{ik} + 1$.
Thus (see  Lemma~\ref{l:expD} for the formal proof)
\begin{align*}
	\EEik{D_i(\tau_{i.k},\tau_{i.k+1}-1)} \leq (\CVV \egp_i L_{ik}+1) \sum_{m \geq 0} \qad{m} = \frac{\CVV \egp_i L_{ik}+1}{1-\qadv}  
\end{align*}
which by (\ref{eq:negRegret}) gives 
\begin{align}
	\EEik{\sum_{t=\tau_{i,k}}^{\tau_{i,k+1}-1} \left[ x_i(t)-\lcb^*(t) \right]}
	&< \frac{\CVV \egp_i L_{ik}+1}{1-\qadv} - \egp_i \EEik{\tau_{i,k+1} - \tau_{i,k} }.  \label{eq:ED}
\end{align}
Since the bound in (\ref{eq:ED}) is large for large $L_{ik}$, we show that such a large contribution to the regret can be compensated by negative regret in previous phases due  
to the term $- \egp_i[\tau_{i,k+1} - \tau_{i,k}]$. 
We show by backward induction over the phases that the expected regret starting from phase~$k$ can be bounded,
\begin{align*}
	\EEik{\sum_{t=\tau_{i,k}}^{\ns} \left[ x_i(t)-\lcb^*(t) \right]}
	\leq \Phi_i(k,L_{ik}) := L_{ik}\egp_i/2 + 3 L_i^0\egp_i(M-k+1)
\end{align*}
where $M$ is the maximal number of phases from Lemma~\refP{numPhases}.

\begin{lemma} \label{l:recursion}
	Let 
	\[ F_{ik} = \sum_{t=\tau_{i,k}}^{\ns} \left[ x_i(t)-\lcb^*(t) \right] . \]
	Then
	\begin{align*}
		\EEik{F_{ik}} \leq \Phi_i(k,L_{ik}) . 
	\end{align*}
\end{lemma}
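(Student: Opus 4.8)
The plan is to prove the bound $\EEik{F_{ik}} \leq \Phi_i(k,L_{ik})$ by backward induction on the phase index $k$, running from the last possible phase $k = M$ down to $k=1$. The base case is the final phase: here there are no subsequent phases, so $F_{iM}$ is just the single-phase sum, and the estimate \eqref{eq:ED} together with the definition $\Phi_i(M,L_{iM}) = L_{iM}\egp_i/2 + 3L_i^0\egp_i$ should close it (noting that the ``$+1$'' and the $1/(1-\qadv)$ factor with $\qadv \leq 1/25$ only inflate the leading term by a controlled constant, which is absorbed into the generous $3L_i^0\egp_i$ slack). For the inductive step, I would split $F_{ik}$ into the contribution of phase~$k$ and the tail $F_{i,k+1}$:
\begin{align*}
	F_{ik} = \sum_{t=\tau_{i,k}}^{\tau_{i,k+1}-1}\left[x_i(t)-\lcb^*(t)\right] + F_{i,k+1},
\end{align*}
take the conditional expectation $\EEik{\cdot}$, apply the tower property to the tail term, and invoke the induction hypothesis $\EEikk{F_{i,k+1}} \leq \Phi_i(k+1,L_{i,k+1})$.

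The central difficulty is that the length parameter $L_{i,k+1}$ of the next phase is itself a random variable determined by how phase~$k$ terminates: by Step~4.d the length doubles to $2L_{ik}$ when the phase exhausts its length with no detection, whereas by Step~4.b it halves to $\max\{L_{ik}/2, L_i^0\}$ when a detection fires. So after substituting the induction hypothesis I must bound $\EEik{\Phi_i(k+1,L_{i,k+1})}$, and since $\Phi_i$ is linear and increasing in its second argument, the expensive case is the ``doubling'' event where $L_{i,k+1} = 2L_{ik}$ contributes a term $\approx L_{ik}\egp_i$ to the bound. The key observation is that this doubling happens precisely when the phase ran to its full length $L_{ik}$ with \emph{no} detection, and in that event the negative-regret term $-\egp_i[\tau_{i,k+1}-\tau_{i,k}] = -\egp_i L_{ik}$ in \eqref{eq:negRegret} is at its most negative. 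I would therefore group the phase-$k$ sum with the $\Phi_i(k+1,L_{i,k+1})$ term event-by-event, so that on the doubling event the negative regret $-\egp_i L_{ik}$ (via the factor $1/2$ in $\Phi_i$) cancels the extra $L_{ik}\egp_i/2$ that the doubled length injects into the next phase's potential, while on the halving/detection event the decreased $L_{i,k+1}$ only helps.

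Concretely, I expect to combine \eqref{eq:ED} with the decomposition $\EEik{\tau_{i,k+1}-\tau_{i,k}} = \EEik{\text{(length actually used)}}$ and write $\EEik{\Phi_i(k+1,L_{i,k+1})}$ as $\EEik{L_{i,k+1}}\egp_i/2 + 3L_i^0\egp_i(M-k)$, then show
\begin{align*}
	\frac{\CVV\egp_i L_{ik}+1}{1-\qadv} - \egp_i\EEik{\tau_{i,k+1}-\tau_{i,k}} + \EEik{L_{i,k+1}}\tfrac{\egp_i}{2}
	\leq \tfrac{L_{ik}\egp_i}{2} + 3L_i^0\egp_i.
\end{align*}
The term $3L_i^0\egp_i(M-k)$ then promotes correctly to $3L_i^0\egp_i(M-k+1)$, matching $\Phi_i(k,L_{ik})$. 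The main obstacle is verifying this single-phase inequality with the correct constants: I must relate $\EEik{L_{i,k+1}}$ (a mix of $2L_{ik}$, $L_{ik}/2$, and $L_i^0$) to $\egp_i\EEik{\tau_{i,k+1}-\tau_{i,k}}$, which requires a lower bound on the expected phase length conditioned on reaching the full length $L_{ik}$. This is where the false-negative rate $\qadv \leq 1/25$ does the real work: the probability of \emph{not} doubling must be large enough that the expected negative regret dominates the potential increase. I would isolate this as a clean algebraic lemma once the probabilities of the doubling versus halving events are expressed in terms of $\qadv$, and the slack constant $3L_i^0$ (rather than something tighter) is chosen precisely to make the constants work out given $\Caa = 1/10$ and the resulting $\CVV = 2\Caa = 1/5$.
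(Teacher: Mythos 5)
Your skeleton matches the paper's proof: backward induction from the last phase, the per-phase bound~(\ref{eq:ED}), and an event-by-event pairing of the doubling case (Step~4.d) with the negative-regret term, versus the detection cases (Step~4.a) where $L_{i,k+1}$ shrinks. Your ``group the terms event-by-event'' step is precisely the paper's Cases 1--3: conditioned on the termination mode of phase~$k$, the tail is bounded by $\Phi_i(k+1,2L_{ik})-\egp_i L_{ik}$ (doubling, where $\tau_{i,k+1}-\tau_{i,k}=L_{ik}$ holds \emph{deterministically}), by $\Phi_i(k+1,L_{ik}/2)$ (detection with $L_{ik}>L_i^0$), or by $\Phi_i(k+1,L_i^0)$ (detection at the floor), and in each case a direct computation shows the result is at most $\Phi_i(k,L_{ik})-\frac{\CVV\egp_i L_{ik}+1}{1-\qadv}$, which closes the induction.

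Where you go astray is your final paragraph. No ``lower bound on the expected phase length'' is needed, and expressing ``the probabilities of the doubling versus halving events in terms of $\qadv$'' is not only unnecessary but impossible: this lemma is proved for \emph{adversarial} rewards, so the probability that the test in Step~4.a fires is entirely in the adversary's hands. The quantity $\qadv$ bounds only the false-negative rate of \emph{missing} a large deviation $D_i(s,t)>\CVV\egp_i L_{ik}$ (Lemmas~\ref{l:qadv}--\ref{l:expD}, via Bernstein's inequality on the sampling randomness alone), and it enters the induction solely through the factor $1/(1-\qadv)$ in~(\ref{eq:ED}); it says nothing about how often detections occur. If you pursued the averaged single-phase inequality by estimating those event probabilities, that step would fail. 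The fix is your own earlier observation: verify the inequality separately under each of the three termination modes, pointwise in the conditioning, rather than in expectation over them. (A minor point: the paper's base case is the phase after the last one, $F_{i,k_S+1}=0$, which also covers truncation at $\ns$; your base case at $k=M$ works, but needs the remark that a final phase cut short by $n$ or by a switch to \expp\ is handled identically with zero tail.)
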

\begin{proof}
	Let $k_S$ be the last phase before the algorithm switches to \expp\ with $\tau_{k_S+1}-1=\ns$. By Lemma~\ref{l:prelim}\ref{item:numPhases} we have $k_S \leq M$.
	For $k=k_S+1$ the lemma holds trivially since $F_{i,k_S+1}=0$. 
	
	By (\ref{eq:ED}) we have   
	\begin{align*}
		\EEik{F_{ik}} \leq \frac{\CVV \egp_i L_{ik} + 1}{1-\qadv} +
		 \EEik{F_{i,k+1} - \egp_i(\tau_{i,k+1} - \tau_{i,k}) } . 
	\end{align*}
	For the expectation on the right hand side we distinguish three cases, depending on the termination condition of phase $k$ and the value of $L_{ik}$.
	
	\smallskip \noindent
	\CASE{1}: Phase $k$ is terminated by the condition in Step 4.d.	
	Then $L_{i,k+1}=2L_{ik}$ and   
	\begin{align}
		\EEik{\left. F_{i,k+1}- \egp_i(\tau_{i,k+1} - \tau_{i,k})\right| \text{\CASE{1}} } 
		&\leq \Phi_i(k+1,2L_{ik})   - \egp_i L_{ik}  \label{eq:case1}
	\end{align}
	using the induction hypothesis.
	\\This is the case where negative regrets accumulate since~$\CVV/(1-\qadv) < 1$. 
	
	\smallskip \noindent
	\CASE{2}: Phase $k$ is terminated by the condition in Step 4.a (\footnote{%
		If $k$ is the last phase and the phase is terminated by a condition in Step~1, then the same analysis applies but the value of $L_{k+1,i}$ is irrelevant, since $F_{i,k+1}=0$.})  
	and $L_{ik} > L_i^0$.\\ 	
	Then $L_{i,k+1}=L_{ik}/2$ and 
	\begin{align}
		\EEik{\left. F_{i,k+1}- \egp_i(\tau_{i,k+1} - \tau_{i,k})\right| \text{\CASE{2}} } 
		&\leq \Phi_i(k+1,L_{ik}/2)    .
	\end{align}
	
	\smallskip \noindent
	\CASE{3}: Phase $k$ is terminated by the condition in Step 4.a 
	and $L_{ik}=L_i^0$.\\ 	
	Then $L_{i,k+1}=L_i^0$ and 
	\begin{align}
		\EEik{\left. F_{i,k+1}- \egp_i(\tau_{i,k+1} - \tau_{i,k})\right| \text{\CASE{3}} } 
		&\leq \Phi_i(k+1,L_i^0)   .  \label{eq:case3}
	\end{align}
To complete the induction proof, we need to show that for all three cases the right hand side of~(\ref{eq:case1})--(\ref{eq:case3}) is upper bounded by
\[
\Phi_i(k,L_{ik}) - \frac{\CVV \egp_i L_{ik} + 1}{1-\qadv} . 
\]	
This can be verified by straightforward calculation.
\end{proof}
Now (\ref{eq:bound-bad}) follows from Lemma~\ref{l:recursion} for $k=1$: 
	\[
		\EE{\sum_{t=\nbi}^{\ns} \left[ x_i(t)-\lcb^*(t) \right]} 
		\leq \Phi_i(1,L_i^0) = \bigO{L_i^0 \egp_i M} = \bigO{\frac{K \log (n/\delta)}{\egp_i}} .   
	\]	

\section{The stochastic analysis}\label{sec_sto_analysis_K}

In this section we assume that all arms $i$ are indeed stochastic with means $\mu_i$.
Recall that $\Delta_i = \mu^* - \mu_i$, $\mu^*=\max_i \mu_i$. We show that with high probability the algorithm does not switch to \expp\ and any suboptimal arm $i$ is chosen at most
$\bigO{\log(n/\delta)/\Delta_i^2}$ times. 

We already have from Lemma~\refP{TiB} that with probability $1-\bigO{\delta}$, 
$T_i(\nbi,\ns) = \bigO{\numPH/\egp_i^2}$ for all arms. Thus we only need to bound the number of plays before an arm is evicted from $\cA$, $T_i(1,\nbi-1)$.
The next lemma summarizes some properties of \sapo\ against stochastic bandits.
\begin{lemma} \label{l:sto-prelim}
	With probability $1-\bigO{\delta}$ the following holds for all time steps $t$ and all arms $i$:
	\begin{enumerate}[label={\rm(\alph*)},ref=\alph*,nosep]
		\item \label{item:wid-bar}
		If $i \in \cA(t)$ then $|\bmu_i(t) - \mu_i| \leq \bwidth(t)/2$.
		\item \label{item:wid}
		If $i \in \cA(t)$ then $|\hmu_i(t) - \mu_i| \leq \width_i(t)/2$.
		\item \label{item:ConfInt}
		If $i \in \cA(t)$ then $\bmu_i(t),\mu_i \in [\blcb_i(t),\bucb_i(t)]$ and 
		$\hmu_i(t),\mu_i \geq \lcb_i(t)$.
		\item \label{item:lcb*}
		If $\Delta_{i^*}=0$ then $i^* \in \cA(t)$. Furthermore, $\mu^* \geq \lcb^*(t)$.
		\item \label{item:DeltaB}
		If $i \in \cB(t)$ then $\egp_i \leq 2\Delta_i$.
	\end{enumerate}
\end{lemma}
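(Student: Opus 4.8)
The plan is to define a single \emph{good event} on which all concentration estimates hold simultaneously, to show it has probability $1-\bigO{\delta}$ by a union bound, and then to derive (a)--(e) deterministically on it. All randomness enters through (a) and (b); statements (c)--(e) are structural consequences of the clipped, monotone definitions of the confidence bounds together with the eviction rule of Step~2.a.

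For (a) I would apply Bernstein's inequality to the martingale $\sum_{s=1}^t \big(x_i(s)\II{I_s=i}/p_i(s)-\mu_i\big)$, whose increments have conditional mean zero because in the stochastic model $x_i(s)$ is independent of $I_s$ given $\hist_{s-1}$. The key observation is that while arm~$i$ is active one has $p_i(s)\geq 1/K$: since $L_j(s)\geq L_j^0$ we get $\sum_{j\in\cB(s)}p_j(s)=\sum_{j\in\cB(s)}L_j^0/(K L_j(s))\leq |\cB(s)|/K$, hence $1-\sum_{j\in\cB(s)}p_j(s)\geq |\cA(s)|/K$ and every active arm receives probability at least $1/K$. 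Consequently both the conditional variance (at most $1/p_i(s)\leq K$) and the increments (at most $K$ in absolute value) are controlled, so $\sum_s\VV{\cdot}\leq Kt$. Because $i\in\cA(t)$ forces $i\in\cA(s)$ for every $s\leq t$ (the active set only shrinks), these bounds hold up to eviction; to be rigorous I would localise at the stopping time $\nbi$ and apply the inequality to the stopped martingale, or use a maximal inequality to make the estimate uniform in~$t$. With $\Cwid=16$ the resulting tail is polynomially small in $n/\delta$ for each pair $(i,t)$, and a union bound over arms and time steps gives (a). Statement (b) is easier: the rewards drawn from arm~$i$ are i.i.d.\ with mean~$\mu_i$ no matter when arm~$i$ is played, so Hoeffding--Azuma applied to its first $m$ plays and union bounded over $m\leq n$ and over arms yields $|\hmu_i(t)-\mu_i|\leq\width_i(t)/2$; note this holds for every arm, active or not.

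Granting (a) and (b), statement (c) follows by unrolling the recursions. Since $\blcb_i(t)=\max\{0,\max_{s\leq t}[\bmu_i(s)-\bwidth(s)]\}$ and on the good event each candidate obeys $\bmu_i(s)-\bwidth(s)\leq\mu_i-\bwidth(s)/2\leq\mu_i$, we obtain $\blcb_i(t)\leq\mu_i$, and symmetrically $\bucb_i(t)\geq\mu_i$; the same computation with $\hmu$ gives $\lcb_i(t)\leq\mu_i$. To see that $\bmu_i(t)$ itself lies in $[\blcb_i(t),\bucb_i(t)]$ I would use that $\bwidth(\cdot)$ is non-increasing: if the maximum defining $\blcb_i(t)$ is attained at $s^*\leq t$ then $\bmu_i(t)-\blcb_i(t)\geq(\mu_i-\bwidth(t)/2)-(\mu_i-\bwidth(s^*)/2)=(\bwidth(s^*)-\bwidth(t))/2\geq 0$, and likewise $\bmu_i(t)\leq\bucb_i(t)$ and $\hmu_i(t)\geq\lcb_i(t)$ (using that $\width_i(\cdot)$ is non-increasing). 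For (d) I would first prove $\mu^*\geq\lcb^*(t)$: every lower bound entering the maximum defining $\lcb^*(t)$ is at most $\mu^*$, by (c) for currently active arms and, for an already evicted arm, by the value its bounds had at its eviction step, which by (c) was at most $\mu_i\leq\mu^*$. Granting $\mu^*\geq\lcb^*$, the optimal arm $i^*$ never triggers Step~2.a, since $\hmu_{i^*}(t-1)+\Cgap\width_{i^*}(t-1)\geq\mu^*+(\Cgap-\frac12)\width_{i^*}(t-1)>\mu^*\geq\lcb^*(t-1)$ by (b); a short induction on~$t$, keeping $i^*$ active so that (b) applies to it, then gives $i^*\in\cA(t)$ for all~$t$. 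Finally (e) follows at the eviction time $\nbi$: Step~2.a gives $\hmu_i(\nbi-1)+\Cgap\width_i(\nbi-1)<\lcb^*(\nbi-1)\leq\mu^*$, and combining with $\hmu_i(\nbi-1)\geq\mu_i-\width_i(\nbi-1)/2$ from (b) yields $(\Cgap-\frac12)\width_i(\nbi-1)<\Delta_i$, so $\egp_i=\Cgap\width_i(\nbi-1)<\frac{\Cgap}{\Cgap-1/2}\Delta_i\leq 2\Delta_i$ since $\Cgap=60$.

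The main obstacle is the concentration of the importance-weighted estimate $\bmu_i$ in (a) and, relatedly, the treatment of evicted arms inside $\lcb^*$. Once arm~$i$ leaves $\cA$ its sampling probability $p_i(s)=L_i^0/(K L_i(s))$ can shrink to $\bigO{L_i^0/(Kn)}$, so the importance weights $1/p_i(s)$ blow up and $\bmu_i$ no longer concentrates at scale $\bwidth(s)$; the argument must therefore rely on the lower confidence bound of an evicted arm not growing past its eviction-time value (equivalently, on $\bmu_i$-based bounds being frozen for bad arms), so that only the active-arm concentration of (a) and the universally valid sample-mean concentration of (b) are actually used. Getting the localisation/stopping-time bookkeeping right and choosing the union-bound constants (with $\Cwid=16$) so that the total failure probability is $\bigO{\delta}$ is the remaining technical work.
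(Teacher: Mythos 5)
Your proposal matches the paper's proof essentially step for step: Bernstein's inequality exploiting $p_i(t)\geq 1/K$ for active arms to control the importance-weighted estimate in (a), Hoeffding--Azuma over the plays of each arm for (b), the unrolling of the clipped recursions via monotonicity of $\bwidth$ and $\width_i$ for (c), an induction on $t$ showing that any arm evicted in Step~2.a satisfies $\Delta_i \geq (\Cgap - 1/2)\,\width_i(t-1) > 0$ (so $i^*$ is never evicted) for (d), and the same eviction-time computation giving $\egp_i \leq \frac{\Cgap}{\Cgap-1/2}\Delta_i \leq 2\Delta_i$ for (e). Your one addition --- observing explicitly that evicted arms' $\blcb$-contributions to $\lcb^*$ must be regarded as frozen at their eviction-time values, since the importance-weighted estimate no longer concentrates once $p_i$ shrinks --- is a point the paper's induction for (d) uses only implicitly, and your resolution is the intended one.
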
	 
\newcommand{\refSP}[1]{\ref{l:sto-prelim}\ref{item:#1}}
\begin{proof}{\bf (Sketch)} 
Statements (\ref{item:wid-bar}) and (\ref{item:wid}) follow from Hoeffding-Azuma's inequality.
Details are given in Appendix~\ref{app:sto-prelim}.   
	
For statement (\ref{item:ConfInt}) we observe that by construction there is a time $s \leq t$ with $\bmu_i(s) - \bwidth(s) = \blcb_i(t)$. Thus (\ref{item:wid-bar}) implies 
$\bmu_i(t) \geq \mu_i - \bwidth(t)/2 \geq \bmu_i(s) -\bwidth(s)/2 - \bwidth(t)/2 
\geq \bmu_i(s) -\bwidth(s) = \blcb_i(t)$. The other inequalities follow analogously. 

Statement (\ref{item:lcb*}) is proven by induction on $t$. 
Let $i^*$ be an arm with $\mu_{i^*}=\mu^*$.  
If $i^* \in \cA(t-1)$ then we have by (\ref{item:ConfInt}) that $\mu^* \geq \lcb^*(t-1)$.	 
If any arm~$i$ is evicted at time~$t$, then we have by Step~2.a	and~(\ref{item:wid}) that 
$\Delta_i = \mu^* - \mu_i \geq \lcb^*(t-1) - \hmu_i(t-1) - \width_i(t-1)/2 
\geq (\Cgap - 1/2) \width_i(t-1) > 0$. Thus $i \neq i^*$ and $i^* \in \cA(t)$. 

This also shows that when arm~$i$ is evicted, 
$\egp_i = \Cgap \cdot \width_i(t-1) \leq \Cgap/(\Cgap-1/2)\Delta_i$, which is  statement~(\ref{item:DeltaB}). 
\end{proof}
To get a bound on $T_i(1,\nbi-1)$, we show that $\egp_i=\Cgap\cdot\width_i(\nbi-1)$ cannot be too small.
\begin{lemma} \label{l:Delta>}
With probability $1-\bigO{\delta}$ it holds for all times $t$ and all arms~$i \in \cA(t)$ with $T_i(t-1) \geq \Ci \log(n/\delta)$, that
\[ \Cgap \cdot \width_i(t-1) \geq \Delta_i/2 . \]
\end{lemma}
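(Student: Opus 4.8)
The plan is to exploit the fact that an arm $i\in\cA(t)$ with $T_i(t-1)\ge\Ci\log(n/\delta)$ was \emph{not} evicted in Step~2.a at time $t$ (since arms leave $\cA$ only by eviction and $i\in\cA(t)\subseteq\cA(t-1)$), so the eviction test must have failed. As the play-count part of that test is satisfied, it is the reward part that fails, giving
\begin{align}
  \hmu_i(t-1)+\Cgap\cdot\width_i(t-1)\ \ge\ \lcb^*(t-1). \label{eq:noevict}
\end{align}
If $\Delta_i=0$ the claim is trivial, so fix an optimal arm $i^*$ (with $\mu_{i^*}=\mu^*$) and assume $i\neq i^*$. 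The idea is to bound the left side of~(\ref{eq:noevict}) from above by noting that $\hmu_i(t-1)$ is close to $\mu_i$, and the right side from below by noting that $\lcb^*(t-1)$ is close to $\mu^*$; the gap $\Delta_i=\mu^*-\mu_i$ then appears and must be absorbed by the widths.

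Working on the $1-\bigO{\delta}$ event of Lemma~\ref{l:sto-prelim}, I would first use Lemma~\refSP{wid} to get $\hmu_i(t-1)\le\mu_i+\width_i(t-1)/2$. For the right side, $i^*\in\cA(t-1)$ by Lemma~\refSP{lcb*}, and by the defining recursion of the lower confidence bound together with Lemma~\refSP{wid} applied to~$i^*$,
\begin{align*}
  \lcb^*(t-1)\ \ge\ \lcb_{i^*}(t-1)\ \ge\ \hmu_{i^*}(t-1)-\width_{i^*}(t-1)\ \ge\ \mu^*-\tfrac32\,\width_{i^*}(t-1).
\end{align*}
Substituting both bounds into~(\ref{eq:noevict}) and rearranging yields
\begin{align}
  \Delta_i\ \le\ \bigl(\Cgap+\tfrac12\bigr)\width_i(t-1)+\tfrac32\,\width_{i^*}(t-1). \label{eq:gapbound}
\end{align}

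The main obstacle is to replace $\width_{i^*}(t-1)$ by a constant multiple of $\width_i(t-1)$: a naive count bound such as $T_{i^*}(t-1)\ge(t-1)/(2K)$ only yields $\width_{i^*}\le\sqrt{2K}\,\width_i$, whose $\sqrt K$ factor is fatal in~(\ref{eq:gapbound}). Instead I would compare the two counts directly. Since arms leave $\cA$ only by eviction, both $i$ and $i^*$ are active at every step $s\le t-1$, where in Step~3 they receive the \emph{identical} selection probability $\bigl(1-\sum_{j\in\cB(s)}p_j(s)\bigr)/|\cA(s)|$. Hence $Y_s=\II{I_s=i}-\II{I_s=i^*}$ is a bounded martingale difference with $\EE{Y_s\mid\hist_{s-1}}=0$ and conditional variance $2\,\EE{\II{I_s=i}\mid\hist_{s-1}}$ (the cross term vanishes since $I_s$ cannot equal both), and $\sum_{s\le t-1}Y_s=T_i(t-1)-T_{i^*}(t-1)$. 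A martingale Bernstein inequality, whose predictable variation is within a constant of $T_i(t-1)$ by the same concentration, combined with a maximal inequality to make the statement uniform over $t$ and a union bound over the $K$ choices of $i$ (all on a further $1-\bigO{\delta}$ event), bounds $|T_i(t-1)-T_{i^*}(t-1)|$ by $\bigO{\sqrt{T_i(t-1)\log(n/\delta)}+\log(n/\delta)}$. Since $T_i(t-1)\ge\Ci\log(n/\delta)$, choosing $\Ci$ large enough makes this deviation a small fraction of $T_i(t-1)$, so $T_{i^*}(t-1)\ge c\,T_i(t-1)$ for a universal constant $c$ (e.g.\ $c=3/4$), and therefore $\width_{i^*}(t-1)\le c^{-1/2}\width_i(t-1)$.

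Finally I would plug this into~(\ref{eq:gapbound}) to obtain $\Delta_i\le\bigl(\Cgap+\tfrac12+\tfrac32 c^{-1/2}\bigr)\width_i(t-1)$. With the chosen constants this is comfortably below $2\Cgap\cdot\width_i(t-1)$ (indeed $\Cgap+\tfrac12+\tfrac32 c^{-1/2}\le 2\Cgap$ already holds for any $\Cgap\ge 3$), which is exactly $\Cgap\cdot\width_i(t-1)\ge\Delta_i/2$, as claimed. The whole argument lives on the intersection of the event of Lemma~\ref{l:sto-prelim} and the count-concentration event, each of probability $1-\bigO{\delta}$, so the conclusion holds with probability $1-\bigO{\delta}$. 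The martingale comparison of the play counts is the one genuinely delicate step; the confidence-interval manipulations are routine given Lemma~\ref{l:sto-prelim}.
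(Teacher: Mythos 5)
Your proof is correct and takes essentially the same route as the paper: the identical decomposition $\Delta_i \le \bigl(\Cgap+\tfrac12\bigr)\width_i(t-1) + \tfrac32\,\width_{i^*}(t-1)$ obtained from the failed eviction test and Lemma~\ref{l:sto-prelim}, followed by a martingale comparison of the play counts showing $T_{i^*}(t-1)$ is at least a constant fraction of $T_i(t-1)$ (the paper's Lemma~\ref{l:diffTi}, which gives $T_{i^*}(t-1)\ge T_i(t-1)/4$ and hence $\width_{i^*}(t-1)\le 2\,\width_i(t-1)$). The only cosmetic difference is in the count comparison, where the paper applies a one-sided Hoeffding--Azuma bound to the subsequence of steps with $I_s\in\{i,i^*\}$ (using only that the active arm's selection probability dominates), while you use a Bernstein bound with predictable variation and equal selection probabilities; both yield the same constant-factor width comparison.
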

The argument behind the lemma is that if $i \in \cA(t)$ then $\Cgap \cdot \width_i(t-1) \geq \lcb^*(t-1) - \hmu_i(t-1)$ where $\lcb^*(t-1)$ is sufficiently close to $\mu^*$ and $\hmu_i(t-1)$ is sufficiently close to $\mu_i$. The proof is given in Appendix~\ref{app:Delta>}.

Since $i \in \cA(\nbi-1)$, we get from Lemma~\ref{l:Delta>} that with probability $1-\bigO{\delta}$, 
\begin{align*}
T_i(\nbi-1) 
\leq  T_i(\nbi-2) +1 
=\frac{\Cwid \log(n/\delta)}{[\width_i(\nbi-2)]^2} + 1
\leq \frac{4 \Cwid \Cgap^2 \log(n/\delta)}{\Delta_i^2} +1 .
\end{align*}
Together with Lemma~\refP{TiB} we have with probability $1-\bigO{\delta}$ that for all arms,
\begin{align} \label{eq:Tns}
	T_i(\ns)
	\leq \frac{101}{100} L_i^0 M/K + \frac{4 \Cwid \Cgap^2 \log(n/\delta)}{\Delta_i^2} +1 
	= \bigO{\frac{\log(n/\delta)}{\Delta_i^2}} .
\end{align}

Finally, we need to bound the probability the \sapo\ switches to \expp.
Switching in Step 1.a is already handled by Lemma~\refSP{ConfInt}.
Switching in Step~1.b is also unlikely, since it would mean that the algorithm has accumulated large regret. This contradicts the upper bound~(\ref{eq:Tns}).
Lemma~\ref{l:step1b} shows that \sapo\ switches in Step~1.b only with probability $1-\bigO{\delta}$.

The difficult part, though, is to show that the condition in Step~4.a is not triggered too often such that Step~4.c switches to \expp. We first calculate the false positive rate $\qsto$, the probability that during a given phase the condition in Step~4.a is triggered. The false positive rate is again a constant but small, $\qsto \leq 0.21$, see Lemma~\ref{l:qs}.

Now for a fixed arm the probability that in exactly $E\geq E^0$ out of at most $M$ phases the condition in Step~4.a is triggered, is at most ${M \choose E}\qsto^E$. We set $p=\qsto/(1+\qsto)$ and use a tail bound for the binomial distribution to sum over $E=E^0,\ldots,M$:
	\begin{align*} 
	\sum_{E=E^0}^M {M \choose E}\qsto^E &=  (1+\qsto)^M \sum_{E=E^0}^M {M \choose E}p^E (1-p)^{M-E}
	\\&\leq (1+\qsto)^M \exp\left\{-M \cdot D(E^0/M || p)\right\} 
	\end{align*}
where $D(a||p) = a \log\frac{a}{p} + (1-a)\log\frac{1-a}{1-p}$ is the relative entropy. Since $\frac{E^0}{M} \geq \frac{\ci}{2\ci+1/\log 2}$, this sum is $\bigO{\delta/n}$ and a union bound over the arms completes the proof.

\acks{We thank the anonymous reviewers for their very valuable comments. The research leading to these results has received funding from the European Community's Seventh
	Framework Programme (FP7/2007-2013) under grant agreement 
	n$^\circ$ 231495 (CompLACS) and from the Austrian Science Fund (FWF) under contract P~26219-N15.
}	

\bibliography{bib_sao}

\appendix

\section{Concentration inequalities}\label{app:concentration}


\begin{lemma}[{\cite[Theorem 3.15]{McD98}}]\label{l:bernstein}
	Let $Y_1, \ldots , Y_N$ be a martingale difference sequence with $S_N=Y_1+ \ldots +Y_N$ with the corresponding filtration $F_0 \subseteq F_1 \subseteq \ldots \subseteq F_N$. Let $Y_i \leq b$ and $\sum_{i=1}^N \expt{Y_i^2|F_{i-1}} \leq V$.
	Then for any $z \geq 0$,
	$$\pr{S_N \geq z} \leq \exp\spa{-z^2/(2V+2bz/3)}.$$
\end{lemma}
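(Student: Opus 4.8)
The plan is to prove this via the exponential-moment (Chernoff) method adapted to the martingale setting. First I would apply Markov's inequality to the exponentiated partial sum: for any $\lambda > 0$,
\[ \pr{S_N \geq z} \leq e^{-\lambda z}\, \EE{e^{\lambda S_N}} , \]
so the entire problem reduces to controlling the moment generating function of $S_N$ and then optimizing the free parameter $\lambda$.

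Next I would peel off the martingale one term at a time. Conditioning on $F_{N-1}$ and using that $S_{N-1}$ is $F_{N-1}$-measurable,
\[ \EE{e^{\lambda S_N}} = \EE{e^{\lambda S_{N-1}}\, \EE{e^{\lambda Y_N}\mid F_{N-1}}} . \]
The crucial single-step estimate is a bound on $\EE{e^{\lambda Y_i}\mid F_{i-1}}$ using only the two hypotheses $Y_i \leq b$ and $\EE{Y_i \mid F_{i-1}} = 0$. Here I would invoke the elementary inequality that for every real $x \leq b$,
\[ e^{\lambda x} \leq 1 + \lambda x + \frac{e^{\lambda b} - 1 - \lambda b}{b^2}\, x^2 , \]
which holds because $\phi(u) = (e^u - 1 - u)/u^2$ is nondecreasing, so $\phi(\lambda x) \leq \phi(\lambda b)$ whenever $\lambda x \leq \lambda b$. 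Taking conditional expectations, the linear term vanishes (martingale difference), and applying $1 + u \leq e^u$ gives
\[ \EE{e^{\lambda Y_i}\mid F_{i-1}} \leq \exp\left(\frac{e^{\lambda b} - 1 - \lambda b}{b^2}\,\EE{Y_i^2\mid F_{i-1}}\right) . \]

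Iterating this bound backward through all $N$ steps and using $\sum_i \EE{Y_i^2 \mid F_{i-1}} \leq V$ yields
\[ \EE{e^{\lambda S_N}} \leq \exp\left(\frac{e^{\lambda b} - 1 - \lambda b}{b^2}\,V\right) , \]
and hence $\pr{S_N \geq z} \leq \exp\left(-\lambda z + \tfrac{V}{b^2}(e^{\lambda b} - 1 - \lambda b)\right)$. The final step is to minimize over $\lambda$: the optimal choice $\lambda = \tfrac{1}{b}\log(1 + bz/V)$ produces the Cram\'er-type rate $\tfrac{V}{b^2}\, h(bz/V)$ with $h(u) = (1+u)\log(1+u) - u$. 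To land on the clean stated form I would then apply the standard numerical inequality $h(u) \geq \tfrac{3u^2}{6+2u}$, which upon substituting $u = bz/V$ simplifies exactly to $z^2/(2V + 2bz/3)$.

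I expect the main obstacle to be the single-step moment generating function bound together with getting the constants to line up. One must be careful that only the upper bound $Y_i \leq b$ is used (no lower bound is needed, since $\phi$ is monotone on all of $\mathbb{R}$), and that the simplification $h(u) \geq 3u^2/(6+2u)$ is precisely what produces the $2bz/3$ term in the denominator rather than a weaker Gaussian-type tail. The remaining pieces — the Chernoff step and the telescoping over the filtration — are routine once the per-step estimate is in hand.
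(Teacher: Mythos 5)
Your proof is correct: the Chernoff step, the single-step conditional MGF bound via monotonicity of $\phi(u)=(e^u-1-u)/u^2$ (which indeed needs only $Y_i\leq b$), the optimization $\lambda=b^{-1}\log(1+bz/V)$ yielding the Bennett rate $\tfrac{V}{b^2}h(bz/V)$, and the numerical inequality $h(u)\geq 3u^2/(6+2u)$ all check out and produce exactly $\exp\spa{-z^2/(2V+2bz/3)}$. The paper offers no proof of this lemma --- it imports it verbatim from \cite[Theorem 3.15]{McD98}, whose proof is essentially the argument you outline --- so the only refinement worth noting is that, since the conditional variances $\EE{Y_i^2\mid F_{i-1}}$ are random, the ``iterate backward'' step is cleanest via the supermartingale $\exp\spa{\lambda S_k - g(\lambda)\sum_{i\leq k}\EE{Y_i^2\mid F_{i-1}}}$ with $g(\lambda)=(e^{\lambda b}-1-\lambda b)/b^2$, after which the a.s.\ bound $\sum_{i}\EE{Y_i^2\mid F_{i-1}}\leq V$ gives your claimed MGF estimate; this is a routine fix.
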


\begin{lemma}[{\cite[Theorem 3.13]{McD98}}]\label{l:max}
	Let $Y_1, \ldots , Y_N$ be a martingale difference sequence with $a_k \leq Y_k \leq
	b_k$ for suitable constants $a_k$, $b_k$. Then for any $z\geq0$,
	\begin{eqnarray*}
		\pr{\max_{1 \leq m \leq N}\sum_{k=1}^m Y_k \geq z} \leq \exp\spa{-2z^2\left/\sum_{k=1}^N (b_k-a_k)^2 \right.}.
	\end{eqnarray*}
\end{lemma}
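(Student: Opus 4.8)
The plan is to combine the Chernoff (exponential moment) method with Doob's maximal inequality for submartingales, which is the standard route to maximal versions of the Azuma--Hoeffding bound. Write $S_m = \sum_{k=1}^m Y_k$ with $S_0=0$ and let $(F_m)$ be the filtration for which $(Y_k)$ is a martingale difference sequence, so that $(S_m)$ is itself a martingale. Fix a parameter $\lambda > 0$ and observe that $M_m := \exp(\lambda S_m)$ is a nonnegative submartingale: since $x \mapsto e^{\lambda x}$ is convex and $\EE{S_m \mid F_{m-1}} = S_{m-1}$, Jensen's inequality gives $\EE{M_m \mid F_{m-1}} \geq \exp\spa{\lambda \EE{S_m \mid F_{m-1}}} = M_{m-1}$.

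Because $\lambda>0$, the event $\{\max_{1\le m\le N} S_m \geq z\}$ coincides with $\{\max_{1\le m\le N} M_m \geq e^{\lambda z}\}$, so Doob's submartingale maximal inequality controls the maximum over all prefix sums by the single terminal expectation:
\[ \pr{\max_{1\le m\le N} S_m \geq z} \;\leq\; e^{-\lambda z}\,\EE{M_N} \;=\; e^{-\lambda z}\,\EE{e^{\lambda S_N}}. \]
Next I would bound the moment generating function $\EE{e^{\lambda S_N}}$ by peeling off one factor at a time. The conditional version of Hoeffding's lemma applied to the bounded, conditionally mean-zero increment $Y_k \in [a_k,b_k]$ gives $\EE{e^{\lambda Y_k} \mid F_{k-1}} \leq \exp\spa{\lambda^2 (b_k-a_k)^2/8}$, and iterating this through the tower property yields, with $V := \sum_{k=1}^N (b_k-a_k)^2$,
\[ \pr{\max_{1\le m\le N} S_m \geq z} \;\leq\; \exp\spa{-\lambda z + \tfrac{\lambda^2}{8} V}. \]
Finally I would optimize the free parameter: minimizing the exponent over $\lambda>0$ gives $\lambda = 4z/V$, for which the right-hand side becomes $\exp\spa{-2z^2/V}$, exactly the claimed bound.

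The only genuinely delicate point is the interplay between the two tools. Doob's maximal inequality requires $(M_m)$ to be a submartingale so that the supremum over the whole prefix can be dominated by $\EE{M_N}$, while the Chernoff estimate requires the conditional Hoeffding bound to hold at every step; verifying that the convexity argument producing the submartingale and the per-step cumulant bound are compatible with the same filtration is where care is needed. The one computational input is Hoeffding's lemma itself --- bounding $\EE{e^{\lambda Y}\mid F}$ for a conditionally mean-zero variable with range $b_k-a_k$ --- which follows from convexity of the exponential on $[a_k,b_k]$ and a second-order estimate of the resulting cumulant generating function; everything downstream is routine Chernoff optimization.
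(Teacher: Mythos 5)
Your proof is correct: the exponential-submartingale construction via conditional Jensen, Doob's maximal inequality applied at level $e^{\lambda z}$, the iterated conditional Hoeffding lemma giving $\EE{e^{\lambda S_N}} \leq \exp\spa{\lambda^2 V/8}$, and the optimization $\lambda = 4z/V$ are all sound and yield exactly the stated bound. The paper offers no proof of its own here --- it cites the result as Theorem~3.13 of McDiarmid's survey --- and your argument is precisely the standard one behind that cited theorem, so there is nothing to reconcile.
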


\begin{corollary} \label{cor:maxmax}
	Let $Y_1, \ldots , Y_N$ be a martingale difference sequence with $a_k \leq Y_k \leq
	b_k$ for suitable constants $a_k$, $b_k$. Then for any $z\geq0$,
	\begin{eqnarray*}
		\pr{\max_{1 \leq s \leq t \leq N}\sum_{k=s}^t Y_k \geq z} 
		\leq &2\exp\spa{-z^2\left/2\sideset{}{_{k=1}^N}\sum (b_k-a_k)^2 \right.}.
	\end{eqnarray*}
\end{corollary}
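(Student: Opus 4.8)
The plan is to reduce the two-sided interval maximum to two one-sided partial-sum maxima, each controlled directly by Lemma~\ref{l:max}, and then to combine them by a union bound.

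First I would introduce the partial sums $S_m = \sum_{k=1}^m Y_k$ with the convention $S_0 = 0$, so that every interval sum rewrites as a difference, $\sum_{k=s}^t Y_k = S_t - S_{s-1}$. Then
\[ \max_{1 \leq s \leq t \leq N} \sum_{k=s}^t Y_k = \max_{1 \leq s \leq t \leq N} (S_t - S_{s-1}) \leq \max_{0 \leq m \leq N} S_m + \max_{0 \leq m \leq N} (-S_m) , \]
since $S_t$ is bounded by the running maximum of the sequence $(S_m)$ and $-S_{s-1}$ is bounded by the running maximum of the negated sequence $(-S_m)$, with the indices $s-1 \in \{0,\ldots,N\}$ and $t \in \{0,\ldots,N\}$ ranging independently.

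Next comes the key decomposition: if the left-hand interval maximum is at least $z$, then at least one of the two summands on the right must be at least $z/2$, for otherwise their sum would be strictly below $z$. A union bound then yields
\[ \pr{\max_{1 \leq s \leq t \leq N}\sum_{k=s}^t Y_k \geq z} \leq \pr{\max_{0 \leq m \leq N} S_m \geq z/2} + \pr{\max_{0 \leq m \leq N} (-S_m) \geq z/2} . \]

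Finally I would apply Lemma~\ref{l:max} to each term. Because $S_0 = 0$ and $z \geq 0$, the event $\{\max_{0 \leq m \leq N} S_m \geq z/2\}$ coincides with $\{\max_{1 \leq m \leq N} S_m \geq z/2\}$, so Lemma~\ref{l:max} bounds the first probability by $\exp\spa{-2(z/2)^2 / \sum_{k=1}^N (b_k-a_k)^2} = \exp\spa{-z^2 / 2\sum_{k=1}^N (b_k-a_k)^2}$. For the second term I would observe that $-Y_1,\ldots,-Y_N$ is again a martingale difference sequence, now satisfying $-b_k \leq -Y_k \leq -a_k$, so its increments have the identical range $(-a_k)-(-b_k) = b_k - a_k$; Lemma~\ref{l:max} applied to this negated sequence gives precisely the same bound. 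Summing the two identical bounds produces exactly the factor $2$ in the claimed inequality. The only point requiring care is the index bookkeeping --- justifying the passage from $S_{s-1}$ to a free running maximum via the convention $S_0=0$, and noting that negation preserves the increment range $b_k-a_k$ so that the second invocation of Lemma~\ref{l:max} yields the same exponent --- and I expect no substantive obstacle beyond this.
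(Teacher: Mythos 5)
Your proof is correct and takes essentially the same route as the paper: the paper's own one-line proof likewise splits the event into $\pr{\max_{1\leq t \leq N}\sum_{k=1}^t Y_k \geq z/2}$ plus $\pr{\max_{1\leq s \leq N}\sum_{k=1}^{s-1}(-Y_k) \geq z/2}$ and applies Lemma~\ref{l:max} to each, exactly as you do. Your explicit $S_m$ notation, the $S_0=0$ convention, and the remark that negation preserves the increment range $b_k-a_k$ merely spell out the bookkeeping the paper leaves implicit.
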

\begin{proof}
	\begin{eqnarray*}
		{\pr{\max_{1 \leq s \leq t \leq N}\sum_{k=s}^t Y_k \geq z}} 
		&\leq& \pr{\max_{1 \leq t \leq N}\sum_{k=1}^t Y_k \geq z/2} 
		+  \pr{\max_{1 \leq s \leq N}\sum_{k=1}^{s-1} (-Y_k) \geq z/2} 
		\\&\leq &2\exp\spa{-z^2\left/2\sideset{}{_{k=1}^N}\sum (b_k-a_k)^2 \right.}.
	\end{eqnarray*}
\end{proof}

\section{Proof of the lower bound (Theorem~\ref{theorem_lower_bound})}
\label{app:fullproof-lower}

\newcommand{\Tjs}{{T_2^{j^*}}}
\newcommand{\tj}{{t_{j^*}}}

Let $\Delta = 1/8$. We consider a stochastic bandit problem with constant reward $x_1(t)=1/2$ for arm~1 and Bernoulli rewards with $\mu_2=1/2-\Delta$ for arm~2. We divide the time steps into phases of increasing length $L_j=3^j \lfloor n^{\alpha} \rfloor$, $j=0,\ldots,J-1$ with $J \geq \frac{1-\alpha}{\log 3}\log n$ and an incomplete last phase $j=J$. Since the pseudo-regret of the player is at most $\clower(\log n)^\beta$, there is a phase $j^* < J$ where the expected number of plays of arm~2 in this phase is at most $B$ with 
\[ B = \frac{8\clower \log 3}{1-\alpha}(\log n)^{\beta-1} . \] 

We construct an adversarial bandit problem by modifying the Bernoulli distribution of arm~2. Before phase~$j^*$ the distribution remains unchanged with $\mu_2 = 1/2-\Delta$, but in phase~$j^*$ and beyond we set $\mu_2=1/2+\Delta$. Since this bandit problem depends only on the player strategy (for identifying phase~$j^*$) but not on the actual choices of the player, this adversary is oblivious.

Let $\Tjs$ be the number of plays of arm~2 in phase~$j^*$, and let $\Padv{\cdot}$ and $\Eadv{\cdot}$ denote the probability and expectation in respect to this adversarial bandit problem.
By Lemma~\ref{l:lower} below we have 
\[ \Padv{\Tjs \leq 4B} \geq 1/(16n^\epsilon) . \]
Since $x_{I_t}(t)-\Eadv{x_{I_t}(t)|\hist_{t-1}}$, $t=1,\ldots,n$, forms a martingale difference sequence, we can apply Azuma-Hoeffding's inequality (Lemma~\ref{l:max}) and obtain
\[ \Padv{\sum_{t=1}^n x_{I_t}(t) \geq \sum_{t=1}^n \Eadv{x_{I_t}|\hist_{t-1}} + \sqrt{2n \log n}} \leq 1/n^2  \]
and
\begin{align} \label{eq:lower-concentration}
	\Padv{\Tjs \leq 4B 
	\wedge \sum_{t=1}^n x_{I_t}(t) < \sum_{t=1}^n \Eadv{x_{I_t}|\hist_{t-1}} + \sqrt{2n \log n}} 
    \geq 1/(16n^\epsilon) - 1/n^2 .
\end{align}
By the construction of the adversarial bandit problem, $\Tjs \leq 4B$ implies that
\begin{align} \label{eq:lower-regret}
	\sum_{t=1}^n \Eadv{x_{I_t}|\hist_{t-1}} \leq n/2 + 4B\Delta + (n-\tj)\Delta , 
\end{align}
where $\tj$ denotes the time step at the end of phase~$j^*$. For arm~2 we have
\begin{align*} 
	\sum_{t=1}^n \Eadv{x_2(t)} 
	&= n/2 - \sum_{j=0}^{j^*-1}L_j \Delta + L_{j^*}\Delta + (n-\tj)\Delta  
	\\
	&= n/2 + \lfloor n^\alpha \rfloor \Delta \left(3^{j^*} - \frac{3^{j^*}-1}{2}\right) + (n-\tj)\Delta  
	\\
	&\geq n/2 + \lfloor n^\alpha \rfloor \Delta + (n-\tj)\Delta  .
\end{align*}
Applying Azuma-Hoeffdings's inequality for arm~2 and combining with~(\ref{eq:lower-concentration}) and~(\ref{eq:lower-regret}) we get
\[ \Padv{\sum_{t=1}^n x_2(t) - \sum_{t=1}^n x_{I_t}(t) 
	\geq  \lfloor n^\alpha \rfloor \Delta - 4B\Delta - 2\sqrt{2n \log n}}
	\geq 1/(16n^\epsilon) - 2/n^2 .
\]
By the condition on $n$, $4B\Delta \leq (\epsilon\log n)/(16 \Delta)$ such that 
$\lfloor n^\alpha \rfloor \Delta - 4B\Delta - 2\sqrt{2n \log n} \geq n^\alpha/8 - 4\sqrt{n \log n}$, which completes the proof of the high probability lower bound.

For the lower bound on the expected regret we construct an adaptive adversary by modifying the construction above: Let $\Tjs(t)$ be the number of plays of arm~2 in phase~$j^*$ up to and including time step~$t$. If $\Tjs=\Tjs(\tj) \leq 4B$ then the adversarial bandit problem above remains unmodified. If there is a time step~$t \leq \tj$ with $\Tjs(t) > 4B$, then for all time steps $>t$ we set again $\mu_2=1/2-\Delta$.

From the argument for the oblivious adversary we have 
\begin{eqnarray*} 
	\lefteqn{\EE{\left. \sum_{t=1}^n x_2(t) - \sum_{t=1}^n x_{I_t}(t) \right| \Tjs \leq 4B}
        \pr{\Tjs \leq 4B} }
    \\&\geq & \left[n^\alpha/8 - 4\sqrt{n \log n}\right]
         \pr{\Tjs \leq 4B 
         	    \wedge \sum_{t=1}^n x_2(t) - \sum_{t=1}^n x_{I_t}(t) \geq n^\alpha/8 - 4\sqrt{n \log n} }  
    \\&& - n \cdot \pr{\Tjs \leq 4B 
    	\wedge \sum_{t=1}^n x_2(t) - \sum_{t=1}^n x_{I_t}(t) < n^\alpha/8 - 4\sqrt{n \log n} }  
    \\&\geq & \left[n^\alpha/8 - 4\sqrt{n \log n}\right]\left[1/(16n^\epsilon) - 2/n^2\right] 
        -2/n 
    \\&\geq & \left[n^\alpha/8 - 4\sqrt{n \log n}\right]\frac{1}{(16n^\epsilon)} - 3 . 
\end{eqnarray*}
Analogously we get  
\begin{eqnarray*} 
 	\lefteqn{\EE{\left. \sum_{t=1}^n x_1(t) - \sum_{t=1}^n x_{I_t}(t) \right| \Tjs > 4B}
 	\pr{\Tjs > 4B} }
 	\\&\geq &-(4B+1)\Delta - \sqrt{2n \log n}  
 	\\&& - n \cdot \pr{\Tjs > 4B \wedge \sum_{t=1}^n x_1(t) - \sum_{t=1}^n x_{I_t}(t) < -(4B+1)\Delta - \sqrt{2n \log n}  }  
 	\\&\geq &-(4B+1)\Delta - \sqrt{2n \log n} -1/n 
 	\\&\geq &- 2\sqrt{n \log n} . 
\end{eqnarray*}
Thus
\begin{align*} 
	\EE{\max_i \sum_{t=1}^n x_i(t) - \sum_{t=1}^n x_{I_t}(t) }
	&\geq \left[n^\alpha/8 - 4\sqrt{n \log n}\right]\frac{1}{(16n^\epsilon)} 
	- 3 - 2\sqrt{n \log n} 
	\\&\geq \frac{n^{\alpha-\epsilon}}{128} - 3\sqrt{n \log n} .
\end{align*}

\begin{lemma} \label{l:lower} 
	For any $n$ with $(\log n)^{2-\beta} \geq \frac{64\clower \log 3}{(1-\alpha)\epsilon}$,
	 \[ \Padv{\Tjs \leq 4B} \geq 1/(16n^\epsilon) . \]
\end{lemma}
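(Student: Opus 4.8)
The plan is to prove the bound by a change of measure: transfer a trivial Markov-type estimate under the unmodified stochastic bandit $\bP_\sto$ to the modified adversarial bandit $\bP_\adv$ via the likelihood ratio between the two arm-2 reward distributions.

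First I would record a lower bound under $\bP_\sto$. By the choice of phase~$j^*$ the expected number of plays of arm~2 in this phase satisfies $\Esto{\Tjs} \le B$, so Markov's inequality gives
\[ \Psto{\Tjs \le 4B} \;\ge\; 1 - \frac{\Esto{\Tjs}}{4B} \;\ge\; \frac{3}{4}. \]

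Second, I would set up the measure change. Write $A = \{\Tjs \le 4B\}$. The bandit problems $\bP_\sto$ and $\bP_\adv$ have identical laws for arm~1 (the deterministic reward $1/2$) and for arm~2 before phase~$j^*$; they differ only in the Bernoulli parameter of arm~2 during phase~$j^*$ and afterwards ($1/2-\Delta$ versus $1/2+\Delta$). Since $A$ is measurable with respect to $\hist_{\tj}$ and the player's randomized choice rule has the same conditional law under both measures, the likelihood ratio of any trajectory up to time~$\tj$ is the product of the per-sample ratios over exactly the $\Tjs$ rewards of arm~2 drawn in phase~$j^*$. With $\Delta=1/8$ each such ratio equals $5/3$ for an observed~$1$ and $3/5$ for an observed~$0$, hence is at least $3/5$. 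Therefore, on $A$ (where $\Tjs \le 4B$),
\[ \frac{d\bP_\adv}{d\bP_\sto} \;\ge\; \left(\frac{3}{5}\right)^{\Tjs} \;\ge\; \left(\frac{3}{5}\right)^{4B}, \]
and pulling this through the Radon--Nikodym identity,
\[ \Padv{A} \;=\; \Esto{\frac{d\bP_\adv}{d\bP_\sto}\,\II{A}} \;\ge\; \left(\frac{3}{5}\right)^{4B}\Psto{A} \;\ge\; \frac{3}{4}\left(\frac{3}{5}\right)^{4B}. \]

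Finally I would check the numerics. It suffices that $\frac{3}{4}(3/5)^{4B} \ge 1/(16n^\epsilon)$, i.e.\ that $4B\log(5/3) \le \epsilon\log n + \log 12$. Multiplying the hypothesis $(\log n)^{2-\beta}\ge \frac{64\clower\log 3}{(1-\alpha)\epsilon}$ by $\epsilon(\log n)^{\beta-1}$ and substituting $B=\frac{8\clower\log 3}{1-\alpha}(\log n)^{\beta-1}$ gives $\epsilon\log n \ge 8B$; since $\log(5/3)<2$ this already yields $4B\log(5/3) < 8B \le \epsilon\log n$, which proves the claim. The one delicate point I expect is justifying that the likelihood ratio reduces to the product over precisely the $\Tjs$ arm-2 samples of phase~$j^*$: the pre-$j^*$ rewards cancel because the two measures agree there, arm~1 contributes a factor~$1$ because it is deterministic, and the player's internal randomization contributes a factor~$1$ because its conditional choice distribution given the observed history is identical under $\bP_\sto$ and $\bP_\adv$. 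Getting this measurability bookkeeping right is what makes the pathwise bound $(3/5)^{\Tjs}$ on $A$ legitimate; everything after that is routine.
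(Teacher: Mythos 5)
Your proof is correct, and its skeleton --- Markov's inequality under $\bP_\sto$ followed by a change of measure to $\bP_\adv$ on the event $\{T_2^{j^*} \leq 4B\}$, which is measurable with respect to the history up to the end of phase~$j^*$ --- is the same as the paper's; where you genuinely diverge is in how the likelihood ratio is lower-bounded, and your shortcut is sound. The paper does not bound the ratio pathwise on all of $\{T_2^{j^*} \leq 4B\}$: it first intersects with the event $\{G_2^{j^*} \geq \lfloor T_2^{j^*}(1/2-\Delta)\rfloor\}$, where $G_2^{j^*}$ is the observed reward sum of arm~2 in phase~$j^*$, controls that event via the median-of-binomial bound it cites as \citep{KB80} (reducing the prefactor from your $3/4$ to $1/4$), and then uses the exact identity $d\bP_\adv/d\bP_\sto = \left(\frac{1-2\Delta}{1+2\Delta}\right)^{T_2^{j^*}-2G_2^{j^*}}$ so that it only pays for the excess exponent $2\Delta T_2^{j^*}+2$, i.e.\ $(3/5)^{T_2^{j^*}/4+2}$. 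You instead bound every per-sample factor below by $3/5$ (valid, since each factor is $5/3$ or $3/5$, both positive and at least $3/5$), obtaining $(3/5)^{T_2^{j^*}} \geq (3/5)^{4B}$ --- an exponent roughly four times worse --- but this is harmless here precisely because the event being estimated already caps the number of informative samples at $4B$, and the lemma's hypothesis carries ample slack: it yields $\epsilon\log n \geq 8B$, the paper's computation needs roughly $B \leq \epsilon\log n$, and you need only $4B\log(5/3) < 8B \leq \epsilon\log n$, so your final numeric check is right. In short, the paper's route buys a sharper constant in the exponent (relevant if one wanted to weaken the constant $64$ in the condition on $n$), while yours is more elementary, dispensing with the binomial-median ingredient entirely; and your explicit bookkeeping --- cancellation of pre-phase rewards, the deterministic arm~1, the player's internal randomization contributing factor~$1$, and measurability of $\{T_2^{j^*}\leq 4B\}$ at time $t_{j^*}$ --- is exactly the justification that the paper's displayed likelihood-ratio identity for a realization $\omega$ implicitly relies on, and you state it correctly.
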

\newcommand{\Gjs}{{G_2^{j^*}}}
\begin{proof}
	The proof follows a standard argument, e.g.~\citep{MT04}.
	
	Let $\Psto{\cdot}$ and $\Esto{\cdot}$ denote the probability and expectation in respect to the stochastic bandit problem defined above.
	Since 
	$\Esto{\Tjs} \leq B$ 
	we have 
	$\Psto{\Tjs > 4B} < 1/4$ 
	and thus
	\begin{align} 
		\Psto{\Tjs \leq 4B} > 3/4 .
		\label{eq:Tjs}
	\end{align} 
	Let $\Gjs$ be the sum of rewards received when playing arm~2 in phase~$j^*$. Conditioned on $\Tjs$, $\Gjs$ is a binomial random variable with parameters $\Tjs$ and $\mu_2$. Hence by~\citep{KB80},
	\begin{align}  
		\Psto{\Gjs \leq \lfloor \Tjs(1/2-\Delta) \rfloor} \leq 1/2 . 
		\label{eq:Gjs}
	\end{align} 
	Let $\omega$ denote a particular realization of rewards $x_i(t)$, $i \in \{1,2\}$, $1 \leq t \leq \tj$, and player choices $I_1,\ldots,I_\tj$. 
	For any realization $\omega$ the probabilities $\Psto{\omega}$ and $\Padv{\omega}$ are related by
	\begin{align*} 
	\Padv{\omega} 
	&= \Psto{\omega}\frac{(1/2+\Delta)^{\Gjs(\omega)} (1/2-\Delta)^{\Tjs(\omega)-\Gjs(\omega)}}{%
		(1/2-\Delta)^{\Gjs(\omega)} (1/2+\Delta)^{\Tjs(\omega)-\Gjs(\omega)}} \\
	&= \Psto{\omega}\left(\frac{1-2\Delta}{1+2\Delta}\right)^{\Tjs(\omega)-2\Gjs(\omega)}
	. 
	\end{align*} 
	If $\Gjs(\omega) \geq \lfloor (1/2-\Delta)\Tjs(\omega) \rfloor$ then 
	\begin{align*} 
		\Padv{\omega} 
		&\geq \Psto{\omega}
		\left(\frac{1-2\Delta}{1+2\Delta}\right)^{\Tjs(\omega)-2((1/2-\Delta)\Tjs(\omega)-1)}
		\\
		&= \Psto{\omega}
		\left(\frac{1-2\Delta}{1+2\Delta}\right)^{2\Delta\Tjs(\omega)+2}
		. 
	\end{align*} 
	If furthermore $\Tjs(\omega) \leq 4B$, then 
	\begin{align*} 
		\Padv{\omega} 
		\geq \Psto{\omega}
		\left(\frac{1-2\Delta}{1+2\Delta}\right)^{8\Delta B + 2}
		. 
	\end{align*} 
	Hence
	\begin{align*} 
		\Padv{\Tjs \leq 4B} 
		&\geq \Padv{\Tjs \leq 4B \wedge \Gjs \geq \lfloor \Tjs(1/2-\Delta) \rfloor}  
		\\
		&\geq \Psto{\Tjs \leq 4B \wedge \Gjs \geq \lfloor \Tjs(1/2-\Delta) \rfloor}
		\left(\frac{1-2\Delta}{1+2\Delta}\right)^{8\Delta B + 2}
		\\
		&~\text{[by (\ref{eq:Tjs}) and (\ref{eq:Gjs})]} \\
		&\geq \frac{1}{4} \left(\frac{1-2\Delta}{1+2\Delta}\right)^{8\Delta B + 2} 
		\\
		&\geq \frac{1}{4} \left(1-4\Delta\right)^{8\Delta B + 2}   
		\\
		&~\text{[$\Delta = 1/8$, $1-x \geq e^{-2x}$ for $0 \leq x \leq 1/2$]} \\
		&\geq \frac{1}{16} \exp\{-64\Delta^2 B\}   
		\\
		&\geq \frac{1}{16 n^\epsilon}    
	\end{align*} 
	for $(\log n)^{2-\beta} \geq \frac{64\clower \log 3}{(1-\alpha)\epsilon}$.
	
\end{proof}

\section{Proof of Lemma~\ref{l:prelim}}
\label{app:prelim}

\begin{proof}{\bf of (\ref{item:numPhases})}
	We fix some arm~$i$. By the condition in Step~4.c, Step~4.b can be executed for this arm at most $E^0$ times. Let $m$ be the number of executions of Step~4.d for arm~$i$, such that the number of phases is at most $m+E^0+1$ and the length of the longest phase is at least $2^{m-E_0-1} \cdot L_i^0$. Then $n \geq \sum_{k=1}^{m+E^0} (\tau_{i,k+1}-\tau_{i,k}) \geq 2^{m-E^0}-1 + 2E^0$ and
	$m \leq E^0 + \lfloor \log_2 (n-1) \rfloor \leq E^0 + \lceil \log_2 n \rceil - 1$.
\end{proof}

\begin{proof}{\bf of (\ref{item:TiB})}
	We fix some arm~$i$ and use Bernstein's inequality (Lemma~\ref{l:bernstein}) with the martingale differences 
	\[ Y_t=\II{I_t=i} - p_i(t) \] 
	for $\nbi \leq t \leq \ns$ and $Y_t=0$ otherwise.
	Then $Y_j \leq 1$ and 
	\[ \sum_{t=1}^n \bE[Y_t^2|\hist_{t-1}] = \sum_{t=1}^n \bE[Y_t^2|p_i(t)] 
	\leq \sum_{t=\nbi}^{\ns} p_i(t) . \] 
	In any testing phase $k$, $p_i(t)=L_i^0/(K L_{ik})$ for $\tau_{i,k} \leq t < \tau_{i,k+1} \leq \tau_{i,k}+L_{ik}$. 
	Thus in each phase $\sum_{t=\tau_{i,k}}^{\tau_{i,k+1}-1} p_i(t) \leq L_i^0/K$
	and $\sum_{t=\nbi}^{\ns} p_i(t) \leq L_i^0 \numPH /K$.
	Hence Bernstein's inequality gives 
	\begin{align*} 
		&{\pr{T_i(\nbi,\ns) \geq (1+C)L_i^0\numPH/K} 
		 \leq  \pr{\sum_{t=1}^n Y_t \geq CL_i^0\numPH/K}} 
		\\& \leq  \exp\left\{-\frac{C^2 L_i^0\numPH/K}{2+2C/3}   \right\} 
		\leq  \exp\left\{-\frac{2 C^2 \Cpp \ci}{2+2C/3}\log(n/\delta)   \right\}
		\leq \delta/n 
	\end{align*}
	for $C \geq 1/100$.
	A union bound for $i$ completes the proof.
\end{proof}

\section{Proofs for \sapo\ against adversarial bandits}

\subsection{Proof of inequality (\ref{eq:bound-active})}
\label{app:bound-active}

We need to show that 
\begin{align*}
	\EE{\sum_{t=1}^{\nbi-1} \left[ x_i(t)-\lcb^*(t) \right]} 
	= \bigO{\sqrt{Kn\log (n/\delta)}} .
\end{align*}
By the definition of $\bmu_i(t)$ and by Step 1.a of \sapo\  we have by Wald's equation that
\[
\EE{\sum_{t=1}^{\nbi-1} x_i(t)} = \EE{(\nbi-1) \cdot \bmu_i(\nbi-1)} \leq \EE{(\nbi-1) \cdot \bucb_i(\nbi-1)}  .
\]
Since $\blcb_i(t) \leq \lcb^*(t)$ and $\bucb_i(t)$ is non-increasing, 
\begin{align*} 
	& \EE{\sum_{t=1}^{\nbi-1} \left[ x_i(t)-\lcb^*(t) \right]} 
	\leq \EE{\sum_{t=1}^{\nbi-1} \left[ \bucb_i(t)-\blcb_i(t) \right]} 
	\\ & \leq 2\EE{\sum_{t=1}^{\nbi-1} \bwidth(t)} 
	= 2  \EE{\sum_{t=1}^{\nbi-1} \sqrt{\frac{2\Cwid K\log (n/\delta)}{t}}}
	\leq 4 \sqrt{2\Cwid Kn\log (n/\delta)} .
\end{align*}

\subsection{Proof of inequality (\ref{eq:bound-bad})}
\label{app:bound-bad}

\begin{lemma} \label{l:qadv}
	We fix some phase $k$ and $s \geq \tau_{i,k}$. Let 
	\begin{align}
		\tC(s)=\min\{s \leq t < \tau_{i,k+1}: D_i(s,t) > \CVV \egp_i L_{ik}\} . \label{eq:tau}
	\end{align}
	If no such $t$ exists, we set $\tC(s)=\tau_{i,k+1}-1$. Then 
	\[ \pr{\left. D_i(s,\tC(s)) > \CVV \egp_i L_{ik} 
		\wedge \hD_i(s,\tC(s)) < \Caa \egp_i L_{ik} p_{ik} 
		\right| \hist_{s-1}} 
	\leq \qadv := 1/25 .\]
\end{lemma}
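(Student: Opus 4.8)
The plan is to rewrite the failure event as a one-sided deviation of a martingale and apply Bernstein's inequality (Lemma~\ref{l:bernstein}); the length $L_i^0$ of the first testing phase is chosen precisely to make the variance small enough for this. Fix the phase~$k$ and the time~$s$, and recall that $p_{ik},L_{ik},\egp_i,\emu_i$ are constant throughout the phase and $\hist_{s-1}$-measurable. For $\tau\geq s$ set $Y_\tau=(x_i(\tau)-\emu_i)(p_{ik}-\II{I_\tau=i})$, so that $\sum_{\tau=s}^{t}Y_\tau=p_{ik}D_i(s,t)-\hD_i(s,t)$ for every~$t$. On the event in the lemma we have $p_{ik}D_i(s,\tC(s))>\CVV\egp_i L_{ik}p_{ik}$ and $\hD_i(s,\tC(s))<\Caa\egp_i L_{ik}p_{ik}$, so subtracting yields
\[ \sum_{\tau=s}^{\tC(s)}Y_\tau = p_{ik}D_i(s,\tC(s))-\hD_i(s,\tC(s)) > \CVV\egp_i L_{ik}p_{ik}-\Caa\egp_i L_{ik}p_{ik} = \Caa\egp_i L_{ik}p_{ik} =: z . \]
Thus it suffices to bound $\pr{\sum_{\tau=s}^{\tC(s)}Y_\tau>z \mid \hist_{s-1}}$ by $1/25$.

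Next I would check that $(Y_\tau)_{\tau\geq s}$ is a martingale difference sequence, \emph{even against an adaptive adversary}. The reward $x_i(\tau)$ is selected from $I_1,\dots,I_{\tau-1}$ and is therefore $\hist_{\tau-1}$-measurable, whereas the algorithm draws $I_\tau$ with $\pr{I_\tau=i\mid\hist_{\tau-1}}=p_{ik}$ independently of the current reward vector. Taking $\mathcal{F}_\tau$ to be generated by all plays and all (observed and unobserved) reward vectors up to time~$\tau$, the factor $x_i(\tau)-\emu_i$ is $\mathcal{F}_{\tau-1}$-measurable and $\EE{Y_\tau\mid\mathcal{F}_{\tau-1}}=(x_i(\tau)-\emu_i)(p_{ik}-p_{ik})=0$. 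Because $D_i(s,\cdot)$ depends only on the rewards, $\tC(s)$ is a predictable stopping time for this filtration with $\tC(s)\leq\tau_{i,k+1}-1$, hence $\tC(s)-s+1\leq\tau_{i,k+1}-\tau_{i,k}\leq L_{ik}$. Replacing $Y_\tau$ by the stopped differences $Y_\tau\II{\tau\leq\tC(s)}$ (whose indicator is predictable) keeps the martingale-difference property and does not change the stopped sum, so Bernstein's inequality may be applied to these at most $L_{ik}$ terms.

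It then remains to supply the two Bernstein parameters. Since $x_i(\tau),\emu_i\in[0,1]$ and $p_{ik}\leq1$ one checks that $Y_\tau\leq1$, giving $b=1$; and the conditional second moment of each term is $(x_i(\tau)-\emu_i)^2p_{ik}(1-p_{ik})\leq p_{ik}$, so the summed conditional variance obeys $V\leq L_{ik}p_{ik}=L_i^0/K$. The choice $L_i^0=\lceil\Cpp K/\egp_i^2\rceil$ forces $\egp_i^2 L_i^0/K\geq\Cpp$, i.e. $V\leq(\egp_i L_{ik}p_{ik})^2/\Cpp$. Writing $w=\egp_i L_{ik}p_{ik}=\egp_i L_i^0/K$, so that $z=\Caa w$, Bernstein gives
\[ \pr{\left.\sum_{\tau=s}^{\tC(s)}Y_\tau>z \,\right|\,\hist_{s-1}} \leq \exp\!\left(-\frac{z^2}{2V+2z/3}\right) \leq \exp\!\left(-\frac{\Caa^2 w}{2w/\Cpp+(2/3)\Caa}\right) . \]
To reach the constant $1/25$ I would lower-bound $w$: by the eviction rule in Step~2.a and $\hmu_i(\nbi-1)\geq0$, $\egp_i=\Cgap\width_i(\nbi-1)<\lcb^*(\nbi-1)\leq1$, whence $w=\egp_i L_i^0/K\geq\Cpp/\egp_i>\Cpp$. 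Since the last exponent is increasing in $w$, it is at least $\Caa^2\Cpp/(2+(2/3)\Caa)$, which for $\Caa=1/10,\ \Cpp=1300$ equals $195/31>6$; hence the probability is below $e^{-6}<1/25$.

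The two delicate points I expect are: (i)~setting up the martingale so that it handles an adaptive adversary and the random stopping time $\tC(s)$ simultaneously --- this is what forces the full-information filtration together with the predictable-stopping argument, rather than simply conditioning on the rewards; and (ii)~noticing that the crude range bound $b=1$ is harmless \emph{only} because $\egp_i<1$. That bound on the estimated gap comes from the eviction rule rather than from the phase length, and without it the Bernstein exponent would degrade for bad arms with large $\egp_i$ and the target $1/25$ could be lost.
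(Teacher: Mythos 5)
Your proof is correct and takes essentially the same route as the paper's: the identical martingale differences $Y_t=(x_i(t)-\emu_i)(p_{ik}-\II{I_t=i})$, the same reduction of the failure event to $p_{ik}D_i(s,\tC(s))-\hD_i(s,\tC(s))>\Caa\egp_i L_{ik}p_{ik}$, and the same Bernstein application with $b=1$ and $V\leq L_{ik}p_{ik}=L_i^0/K\leq(\egp_i L_{ik}p_{ik})^2/\Cpp$. You are merely more explicit than the paper about the stopped-martingale and adaptive-adversary bookkeeping and about the bound $\egp_i<1$ (which the paper's exponent $\min\{\Cpp\Caa^2/4,\Cpp\Caa/2\}$ also implicitly requires), and your direct evaluation of the Bernstein exponent yields a slightly sharper constant than the paper's.
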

\begin{proof}
	We use Bernstein's inequality for martingales (Lemma~\ref{l:bernstein}) 
	on the martingale differences 
	\[ Y_t=p_{ik}[x_i(t)-\emu_i] - \II{I_{t}=i} [x_i(t)-\emu_i] \]
	for $s \leq t \leq \tC(s)$ and $Y_j=0$ otherwise; with $b=1$ and $V=p_{ik}L_{ik} = L_i^0/K$. We get
	\begin{align*} 
		&\pr{\left. D_i(s,\tC(s)) > \CVV \egp_i L_{ik} 
			\wedge \hD_i(s,\tC(s)) < \Caa \egp_i L_{ik} p_{ik} \right| \hist_{s-1}} 
		\\&\leq \pr{\left.p_{ik}D_i(s,\tC(s)) - \hD_i(s,\tC(s)) > \Caa \egp_i L_{ik} p_{ik}\right| \hist_{s-1}}
		\\&= \pr{\left.p_{ik}D_i(s,\tC(s)) - \hD_i(s,\tC(s)) > \Caa\egp_i L_i^0/K\right| \hist_{s-1}}\\
		&\leq \exp\spa{-\min\{\Cpp\Caa^2/4,\Cpp\Caa/2\}} \leq 1/25 .
	\end{align*}
\end{proof}

\begin{lemma} \label{l:qk}
	Consider some phase $k$. Then 
	\[ \PPik{D_i(\tau_{i,k},\tau_{i,k+1}-1) \geq m (\CVV \egp_i L_{ik} + 1)} \leq \qad{m}  .\]  	
\end{lemma}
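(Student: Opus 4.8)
The plan is to iterate Lemma~\ref{l:qadv} over a sequence of consecutive ``scan windows'' inside phase~$k$, showing that for the phase deviation to reach $m(\CVV\egp_i L_{ik}+1)$ the test in Step~4.a must have missed $m$ successive threshold crossings, an event of probability at most $\qad{m}$. Concretely, I would cut phase~$k$ at the crossing times of the running deviation: set $\sigma_0=\tau_{i,k}-1$ and, recursively, $\sigma_j=\tC(\sigma_{j-1}+1)$, where $\tC(\cdot)$ is the stopping time of Lemma~\ref{l:qadv} applied to the fresh deviation started at $\sigma_{j-1}+1$ (the first time that deviation exceeds $\CVV\egp_i L_{ik}$, or the end of the phase if there is none). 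These windows partition $\{\tau_{i,k},\ldots,\tau_{i,k+1}-1\}$ and are the ``non-overlapping events'' referred to before Lemma~\ref{l:expD}.

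First I would establish a purely deterministic budget. Within window~$j$ the fresh deviation stays at most $\CVV\egp_i L_{ik}$ until the very last step, where it can grow by at most $x_i(\sigma_j)-\emu_i\le 1$ (using $x_i\le 1$ and $\emu_i\ge 0$); hence each completed window raises the cumulative deviation $D_i(\tau_{i,k},\cdot)$ by at most $\CVV\egp_i L_{ik}+1$. Telescoping gives $D_i(\tau_{i,k},t)<m(\CVV\egp_i L_{ik}+1)$ for every $t<\sigma_m$, so the event $\{D_i(\tau_{i,k},\tau_{i,k+1}-1)\ge m(\CVV\egp_i L_{ik}+1)\}$ forces the phase to survive at least up to the $m$-th crossing, i.e. $\tau_{i,k+1}-1\ge\sigma_m$. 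The point is that for the phase to reach $\sigma_m$ the Step~4.a test must have failed to fire at each earlier crossing $\sigma_j$; in particular, for the starting index $s=\sigma_{j-1}+1$ we have $\hD_i(\sigma_{j-1}+1,\sigma_j)<\Caa\egp_i L_{ik}p_{ik}$ while $D_i(\sigma_{j-1}+1,\sigma_j)>\CVV\egp_i L_{ik}$ by construction. This is exactly the ``missed crossing'' event $U_j$ of Lemma~\ref{l:qadv} for the window starting at $\sigma_{j-1}+1$, so (away from the degenerate configuration discussed below) the target event is contained in $U_1\cap\cdots\cap U_m$.

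Finally I would multiply the per-window bounds through iterated conditioning. Each $\sigma_j$ is a stopping time and Lemma~\ref{l:qadv} bounds the missed-crossing probability by $\qadv$ conditioned on $\hist_{s-1}$ for \emph{every} deterministic window start~$s$; since $\{\sigma_{j-1}=s-1\}\in\hist_{s-1}$, the tower property lets me peel off one factor of $\qadv$ at a time:
\[
\PPik{U_1\cap\cdots\cap U_m}
=\EEik{\II{U_1\cap\cdots\cap U_{m-1}}\,\bP\left\{U_m\mid\hist_{\sigma_{m-1}}\right\}}
\le\qadv\,\PPik{U_1\cap\cdots\cap U_{m-1}}
\le\cdots\le\qad{m},
\]
which is the claimed bound. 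The main obstacle is the detection/termination bookkeeping in the last window: a priori the phase could end exactly at the $m$-th crossing because the test \emph{did} fire there, in which case only $m-1$ misses are forced. I expect to rule this out except on the borderline event where the phase terminates precisely at $\sigma_m$ with every window saturated to the maximal increment $\CVV\egp_i L_{ik}+1$; tracking the strict crossing inequality $D_i>\CVV\egp_i L_{ik}$ (together with the fact that if the phase extends strictly beyond $\sigma_m$ then $\sigma_m$ is itself a miss) shows this degenerate case carries no extra mass, leaving the clean bound $\qad{m}$.
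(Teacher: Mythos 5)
Your proof is correct and takes essentially the same route as the paper's: the same decomposition of the phase into non-overlapping crossing segments each contributing at most $\CVV \egp_i L_{ik}+1$ to the deviation (via the unit increment bound), forcing $m$ missed detections of Step~4.a that are chained through Lemma~\ref{l:qadv} by iterated conditioning. You are in fact slightly more explicit than the paper on two points it passes over silently --- the tower-property argument needed because the segment starts are stopping times rather than fixed times, and the boundary case where the test fires exactly at the last step of the phase.
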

\begin{proof}
	Since $D_i(s,t+1) - D_i(s,t) \leq 1$, 
	$D_i(\tau_{i,k},\tau_{i,k+1}-1) \geq m (\CVV \egp_i L_{ik} + 1)$ 
	implies that there are time steps 
	$\tau_{i,k}=s_1 < s_2 < \cdots < s_{m+1} \leq \tau_{i,k+1}$ 
	with 
	$D(s_{j},s_{j+1}-2) \leq \CVV \egp_i L_{ik}$ 
	and $\CVV \egp_i L_{ik} < D(s_{j},s_{j+1}-1) \leq \CVV \egp_i L_{ik}+1$.
	Furthermore, by the condition in Step~4.a, $\hD_i(s_j,t) < \Caa \egp_i L_{ik} p_{ik}$ for $j=1,\ldots,m$ and $s_j \leq t < \tau_{i,k+1}$ (otherwise the phase would have ended before $\tau_{i,k+1}$). We define the event 
	\begin{eqnarray*} 
		\ND_j = \{s_{j+1} = \tC(s_j) + 1  
		& \wedge & D_i(s_{j},s_{j+1}-1) > \CVV \egp_i L_{ik} 
		\\&\wedge& \hD_i(s_{j},s_{j+1}-1) < \Caa \egp_i L_{ik} p_{ik} \}.
	\end{eqnarray*}
	Then 
	\begin{align*} 
		\PPik{D_i(\tau_{i,k},\tau_{i,k+1}-1) \geq m (\CVV \egp_i L_{ik} + 1)}
		&\leq \PPik{\bigwedge_{j=1}^m \ND_j} 
		\\&= \prod_{j=1}^m \PPik{
			\ND_j 	\left| \bigwedge_{j'=1}^{j-1} \ND_{j'} \right. } 
		\\&\leq \qad{m}
	\end{align*}	
	by Lemma~\ref{l:qadv}.	
\end{proof}

\begin{lemma} \label{l:expD}
	For any phase $k$,
	\[
	\EEik{D_i(\tau_{i,k},\tau_{i,k+1}-1)} 
	\leq \frac{\CVV \egp_i L_{ik} + 1}{1-\qadv} 
	\]
\end{lemma}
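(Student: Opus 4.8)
The plan is to recover the expectation of the (signed) deviation $D := D_i(\tau_{i,k},\tau_{i,k+1}-1)$ directly from the geometric tail bound already supplied by Lemma~\ref{l:qk}, namely $\PPik{D \geq m a} \leq \qad{m}$ with chunk size $a := \CVV \egp_i L_{ik}+1$. The standard route from a tail bound to an expectation bound is the layer-cake formula, but first one must confront the fact that $D$ need not be non-negative: a phase in which arm~$i$ underperforms $\emu_i$ drives $D$ below zero. Since only an upper bound is required, I would simply discard the negative part, writing $\EEik{D} \leq \int_0^\infty \PPik{D > x}\,dx$, which follows from the identity $\EEik{D} = \int_0^\infty \PPik{D>x}\,dx - \int_0^\infty \PPik{D < -x}\,dx$ together with the non-negativity of the second integral.

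Next I would slice the half-line $[0,\infty)$ into the intervals $[ma,(m+1)a)$, $m \geq 0$. On each such interval the tail $\PPik{D > x}$ is bounded, by monotonicity of $x \mapsto \PPik{D > x}$, by $\PPik{D \geq ma}$, which Lemma~\ref{l:qk} controls by $\qad{m}$. Integrating over each interval (of length $a$) and summing the resulting geometric series yields
\[
\EEik{D} \leq \sum_{m \geq 0} a\,\qad{m} = \frac{a}{1-\qadv} = \frac{\CVV \egp_i L_{ik}+1}{1-\qadv},
\]
which is exactly the claimed bound.

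I do not expect a genuine obstacle here: the real work---controlling the probability that $D$ advances by another chunk of size $a$ without the Step~4.a test firing---has already been carried out in Lemmas~\ref{l:qadv} and~\ref{l:qk}, and what remains is a routine tail-to-expectation conversion. The only point demanding a little care is the sign of $D$, handled as above; equivalently, one could argue discretely, observing that $D$ increases by at most $a$ between successive crossings of the thresholds $ma$ and that reaching the $m$-th crossing has probability at most $\qad{m}$, so the expected total increase telescopes into the same geometric sum. Both formulations produce the identical factor $1/(1-\qadv)$.
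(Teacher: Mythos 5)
Your proposal is correct and matches the paper's proof: both convert the geometric tail bound of Lemma~\ref{l:qk} into the expectation bound via the (discrete) layer-cake inequality $\EEik{D} \leq (\CVV \egp_i L_{ik}+1)\sum_{m \geq 0} \PPik{D \geq m(\CVV \egp_i L_{ik}+1)}$ and sum the geometric series to get the factor $1/(1-\qadv)$. Your explicit handling of the possibly negative sign of $D$ is a point the paper leaves implicit, but it is the same argument.
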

\begin{proof}
	\begin{align*}
		&\EEik{D_i(\tau_{i,k},\tau_{i,k+1}-1)} 
		\\&\leq (\CVV \egp_i L_{ik} + 1)  \sum_{m \geq 0} 
		\PPik{D_i(\tau_{i,k},\tau_{i,k+1}-1) \geq m (\CVV \egp_i L_{ik} + 1)} 
		\\&\leq \frac{\CVV \egp_i L_{ik} + 1}{1-\qadv} 
	\end{align*}
	by Lemma~\ref{l:qk}. 
\end{proof}

\section{Proofs for \sapo\ against stochastic bandits}

\subsection{Proof of Lemma~\ref{l:sto-prelim}}
\label{app:sto-prelim}

We show that (\ref{item:wid-bar}) and (\ref{item:wid}) hold with probability $1-\bigO{\delta}$. The other statements of the lemma follow from the events in (\ref{item:wid-bar}) and (\ref{item:wid}).

\smallskip
\begin{proof}{\bf of (\ref{item:wid-bar}) and (\ref{item:wid})}
	We fix some step $t$ and some arm $i$, and condition on $T_i(t)=T$. Using  Hoeffding-Azuma's inequality~(Lemma~\ref{l:max}) we find
	\begin{align*}
		\pr{\hmu_i(t) - \mu_i > \width_i(t)/2 | T_i(t)=T}
		&\leq \exp\left\{ -\Cwid \log(n/\delta)/2 \right\}
		\leq \delta/(16 K n^2) .
	\end{align*}
	Analogously we bound $\mu_i - \hmu_i(t)$. A union bound over $t$, $i$, and $T$ gives (\ref{item:wid}).
	
	Since $i \in \cA(t)$ implies $p_i(t) \geq 1/K$, Bernstein's inequality (Lemma~\ref{l:bernstein}) with $b=K$ and $V=Kt$ gives 
	\begin{align*}
		\pr{\bmu_i(t) - \mu_i > \bwidth(t)/2}
		&\leq \exp\left\{ -\frac{\Cwid  \log(n/\delta)}{4 (2+2/3)} \right\}
		\leq \delta/(16 K n) .
	\end{align*}
	Using the same bound for $\mu_i - \bmu_i(t)$ and summing over $t$ and $i$ gives (\ref{item:wid-bar}).
\end{proof}

\subsection{Proof of Lemma~\ref{l:Delta>}}
\label{app:Delta>}

\newcommand{\iprim}{{i}}
\newcommand{\ijust}{{i'}}

\begin{lemma} \label{l:diffTi}
	With probability $1-\bigO{\delta}$ the following holds for all time steps $t$ and all arms $\iprim$, $\ijust$:
	If $\ijust \in \cA(t)$ and $T_{\iprim}(t) \geq \Ci \log (n/\delta)$, then $T_{\ijust}(t) \geq T_{\iprim}(t)/4$.
\end{lemma}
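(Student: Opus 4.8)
The plan is to exploit that, while arm $i'$ stays active, it is always chosen with probability at least as large as arm $i$, so that $i'$ cannot be under-played relative to $i$ except with tiny probability. First I would record this probability comparison. Since the active set $\cA$ only shrinks, $i' \in \cA(t)$ forces $i' \in \cA(s)$ for all $s \le t$. Then Step~3 gives $p_{i'}(s) \ge 1/K$: each $p_j(s) = L_j^0/(K L_j(s)) \le 1/K$ for $j \in \cB(s)$ because $L_j(s) \ge L_j^0$, hence $\sum_{j \in \cB(s)} p_j(s) \le |\cB(s)|/K$ and $p_{i'}(s) = (1 - \sum_{j \in \cB(s)} p_j(s))/|\cA(s)| \ge 1/K$. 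On the other hand $p_i(s) \le 1/K$ if $i \in \cB(s)$ and $p_i(s) = p_{i'}(s)$ if $i \in \cA(s)$. In every case $p_i(s) \le p_{i'}(s)$.

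Next I would set up a supermartingale over the plays of the pair $\{i,i'\}$. Fixing $(i,i')$, let $\sigma_1 < \sigma_2 < \cdots$ be the time steps at which $\{i,i'\}$ is played, and put $W_m = \II{I_{\sigma_m}=i} - \II{I_{\sigma_m}=i'}$, zero-padded once $i'$ leaves $\cA$ or the plays are exhausted. By the probability comparison, $\bE[W_m \mid \hist_{\sigma_m-1}] \le 0$ on $\{i' \in \cA(\sigma_m)\}$, so $M_m = \sum_{j \le m} W_j$ is a supermartingale with increments in $[-1,1]$. The key is to express the failure of the conclusion in terms of the play count rather than the time: if for some $t$ we had $i' \in \cA(t)$, $T_i(t) \ge \Ci \log(n/\delta)$ and $T_{i'}(t) < T_i(t)/4$, then setting $m = T_i(t) + T_{i'}(t) \ge \Ci \log(n/\delta)$ and using $T_{i'}(t) < T_i(t)/4 \Rightarrow T_{i'}(t) < m/5$, we get $M_m = T_i(t) - T_{i'}(t) > \tfrac{3}{5} m$. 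Hence the failure event for this pair is contained in $\{\exists\, m \ge m_0 : M_m > \tfrac35 m\}$ with $m_0 = \lceil \Ci \log(n/\delta)\rceil$.

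Finally I would apply concentration and a union bound. Centering the increments (which only helps, as $\sum_{j \le m}\bE[W_j\mid\cdot] \le 0$) and applying the Azuma-type inequality (Lemma~\ref{l:max}) with $b_k - a_k = 2$ gives $\pr{M_m > \tfrac35 m} \le \exp(-9m/50)$; summing this geometric tail from $m_0$ and substituting $\Ci = 100/9$, so that $9 m_0/50 = 2\log(n/\delta)$, yields $\bigO{(\delta/n)^2}$ for the fixed pair $(i,i')$. A union bound over the at most $K^2$ ordered pairs, together with $n \ge K$, bounds the total failure probability by $\bigO{K^2 (\delta/n)^2} = \bigO{\delta^2} = \bigO{\delta}$.

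The step I expect to be the main obstacle is arranging the argument so that the modest constant $\Ci = 100/9$ is enough. A naive union over all times $t$ in addition to the pairs costs an extra factor $n$ and would force a larger constant; reformulating the failure event in terms of the play count $m$ is exactly what avoids this, and it must be justified together with the routine but slightly delicate measurability of the play-indexed supermartingale $M_m$. The remaining ingredients — the probability comparison and the elementary arithmetic $T_{i'} < T_i/4 \Rightarrow M_m > \tfrac35 m$ — are straightforward.
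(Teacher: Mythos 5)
Your proof is correct and follows essentially the same route as the paper's: the same probability comparison $p_i(s)\le p_{i'}(s)$ while $i'$ is active, the same supermartingale indexed by plays of the pair $\{i,i'\}$ with the failure event rewritten as $M_m > \tfrac{3}{5}m$, the same Hoeffding--Azuma bound $\exp(-9m/50)$ summed geometrically from $m_0=\lceil \Ci\log(n/\delta)\rceil$ to get $\bigO{(\delta/n)^2}$ per pair, and a union bound over pairs. Your remark that indexing by play count removes the need for a separate union over $t$ is a minor tidying of the paper's proof (whose summation over the play count $k$ already achieves this, though it nominally invokes a union over $t$ as well), not a genuinely different argument.
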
	
\begin{proof} 
	We fix $t$, $\iprim$, and $\ijust$. By the construction of \sapo\  we have 
	$\pr{I_t=\ijust|\hist_{t-1},\ijust \in \cA(t)} \geq \pr{I_t=\iprim|\hist_{t-1},\ijust \in \cA(t)}$.
	From $I_{s},\ldots,I_t$ we select those with 
	$I_{t'_1},\ldots,I_{t'_k} \in\{\iprim,\ijust\}$ and define a super-martingale with differences $Y_j = \II{I_{t'_j}=\iprim}-\II{I_{t'_j}=\ijust}$ for $t'_j \leq t$ and $Y_j=0$ for $t'_j > t$. Then 
	\begin{align*}
		&\pr{T_\ijust(t) < T_{\iprim}(t)/4 \wedge  \ijust \in \cA(t) \wedge T_\iprim(s,t) \geq \Ci \log (n/\delta)} 
		\\&= \pr{\frac{3}{8}[T_\ijust(s,t)+T_{\iprim}(s,t)] < \frac{5}{8} [T_{\iprim}(s,t)-T_\ijust(s,t)] 
			\wedge  \ijust \in \cA(t) \wedge T_\iprim(s,t) \geq \Ci \log (n/\delta)} 
		\\&\leq \sum_{k \geq \Ci\log (n/\delta)} \pr{\frac{3}{8}k < \frac{5}{8} [T_{\iprim}(s,t)-T_\ijust(s,t)] \wedge \ijust \in \cA(t) \wedge T_\ijust(s,t) + T_{\iprim}(s,t) = k} 
		\\&\leq \sum_{k \geq \Ci\log (n/\delta)} \pr{\frac{3k}{5} < \sum_{j=1}^k Y_j} 
		\\&\leq \sum_{k \geq \Ci\log (n/\delta)} \exp\left\{   
		-\frac{9k}{50} \right\} 
		\leq \exp\left\{-\frac{9\Ci}{50} \log (n/\delta)\right\}
		\frac{1}{1-\exp\{-9/50\}}
	\end{align*}
	by Hoeffding-Azuma's inequality (Lemma~\ref{l:max}).
	A union bound for $t$, $\iprim$, and $\ijust$  completes the proof.
\end{proof}

\begin{proof}{\bf of Lemma~\ref{l:Delta>}}\\
	Let arm $i^*$ be optimal, $\mu_{i^*}=\mu^*$, such that $i^* \in \cA(t)$ by Lemma~\refSP{lcb*}.
	By Lemma~\refSP{wid}, with probability $1-\bigO{\delta}$ we have $|\hmu_i(t-1) - \mu_i| \leq \width_i(t-1)/2$ for arms~$i$ and~$i^*$.
	By construction, $\lcb^*(t-1) \geq \lcb_{i^*}(t-1) \geq \hmu_{i^*}(t-1) - \width_{i^*}(t-1)$.
	By Lemma~\ref{l:diffTi}, with probability $1-\bigO{\delta}$ we have $T_{i^*}(t-1) \geq T_{i}(t-1)/4$.
	Then
	\begin{align*}
		\Delta_i &= \mu^* - \mu_i 
		\\&\leq \hmu_{i^*}(t-1) + \width_{i^*}(t-1)/2 - \hmu_i(t-1) + \width_i(t-1)/2
		\\&\leq \lcb^*(t-1) + 3\width_{i^*}(t-1)/2 - \hmu_i(t-1) + \width_i(t-1)/2
		\\&\leq (\Cgap + 3 + 1/2) \width_i(t-1) 
		\\&\leq 2\Cgap\cdot\width_i(t-1) .
	\end{align*}
\end{proof}

\subsection{Considering Step 1.b}

\begin{lemma}\label{l:step1b}
	The probability that there is a time $t$ with $\sum_{s=1}^{t-1} [\lcb^*(s) - x_{I_s}(s)] > \Cib\sqrt{Kn\log (n/\delta)}$ is at most $\bigO{\delta}$.
\end{lemma}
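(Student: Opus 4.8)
The plan is to show that, on a high-probability event and for a genuinely stochastic bandit, the running sum $\sum_{s=1}^{t-1}[\lcb^*(s)-x_{I_s}(s)]$ never reaches the threshold, so that Step~1.b is triggered only with probability $\bigO{\delta}$. First I would condition on the intersection of the events of Lemmas~\ref{l:sto-prelim} and~\ref{l:Delta>} together with the play-count bound~(\ref{eq:Tns}), each of which holds with probability $1-\bigO{\delta}$. On the event of Lemma~\refSP{lcb*} we have $\lcb^*(s)\le\mu^*$ for every $s$, so it suffices to control $\sum_{s=1}^{t-1}[\mu^*-x_{I_s}(s)]$. I would then split this via $\mu^*-x_{I_s}(s)=\Delta_{I_s}+[\mu_{I_s}-x_{I_s}(s)]$, separating a deterministic pseudo-regret term $\sum_i\Delta_i T_i(t-1)$ from a martingale noise term $\sum_{s=1}^{t-1}[\mu_{I_s}-x_{I_s}(s)]$.

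For the deterministic term, I would use that optimal arms contribute nothing ($\Delta_{i^*}=0$) while each suboptimal arm satisfies $T_i(t-1)\le T_i(\ns)=\bigO{\log(n/\delta)/\Delta_i^2}$ by~(\ref{eq:Tns}); here $t-1<\ns$ because a triggering of Step~1.b at time $t$ would set $\ns=t$. Hence $\sum_i\Delta_i T_i(t-1)=\bigO{\sum_{i:\Delta_i>0}\log(n/\delta)/\Delta_i}$, and the hypothesis $\Cib\sum_{i:\Delta_i>0}\frac{\log(n/\delta)}{\Delta_i}\le\sqrt{Kn\log(n/\delta)}$ of Theorem~\ref{thm:main} converts this into a bound of the form $\frac{C'}{\Cib}\sqrt{Kn\log(n/\delta)}$ for some absolute constant $C'$.

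For the martingale term I would observe that $Y_s=\mu_{I_s}-x_{I_s}(s)$ is a bounded martingale difference sequence, since conditioned on $\hist_{s-1}$ and the choice $I_s$ the reward has mean $\mu_{I_s}$, with $|Y_s|\le 1$. Applying the maximal Hoeffding-Azuma inequality (Lemma~\ref{l:max}) with $z=\bigO{\sqrt{n\log(n/\delta)}}$ bounds $\max_t\sum_{s=1}^{t-1}Y_s=\bigO{\sqrt{n\log(n/\delta)}}\le\bigO{\sqrt{Kn\log(n/\delta)}}$ with probability $1-\bigO{\delta}$. Combining the two bounds on the intersection of these events yields, simultaneously for all $t$, $\sum_{s=1}^{t-1}[\lcb^*(s)-x_{I_s}(s)]\le(C'/\Cib+C'')\sqrt{Kn\log(n/\delta)}$, which stays below the threshold $\Cib\sqrt{Kn\log(n/\delta)}$ for the chosen constant $\Cib=522$.

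The main obstacle to watch is the apparent circularity between the play-count bound and the claim that the algorithm does not switch: the bound~(\ref{eq:Tns}) is used to preclude a switch in Step~1.b, yet it was itself derived assuming the algorithm runs up to $\ns$. I would resolve this by emphasizing that~(\ref{eq:Tns}) is an ``anytime'' statement, valid for the plays accumulated up to the switch time $\ns$ irrespective of which rule caused the switch, so it may legitimately be invoked to rule out Step~1.b without presupposing the conclusion. The remaining work is then only the routine verification that $C'/\Cib+C''<\Cib$ for the stated constants.
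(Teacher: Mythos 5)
Your proposal is correct and takes essentially the same route as the paper's own proof: bound $\lcb^*(s)\leq\mu^*$ via Lemma~\refSP{lcb*}, control the pseudo-regret term $\sum_i \Delta_i T_i(t-1)$ through~(\ref{eq:Tns}) together with the gap hypothesis of Theorem~\ref{thm:main}, absorb the reward noise with the maximal Hoeffding--Azuma inequality (Lemma~\ref{l:max}), and conclude by the constant check $\Cib \geq C/\Cib+1$. Your only deviations — centering the martingale at $\mu_{I_s}$ rather than at $\EE{x_{I_s}(s)\mid\hist_{s-1}}$ (a harmless, arguably cleaner variant that makes $\sum_s \Delta_{I_s}=\sum_i\Delta_i T_i(t-1)$ an exact identity), and explicitly resolving the apparent circularity by noting $t-1<\ns$ since a trigger of Step~1.b at time $t$ sets $\ns=t$ — merely make precise what the paper leaves implicit.
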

\begin{proof}
	By Lemma~\refSP{lcb*} we have $\lcb^*(s) \leq \mu^*$ for all $s$ with probability  $1-\bigO{\delta}$. Thus (\ref{eq:Tns}) implies that with probability $1-\bigO{\delta}$, 
	\begin{align*}
		&\sum_{s=1}^{t-1} (\lcb^*(s) - \EE{x_{I_s}(s)|\hist_{s-1}}) 
		\leq \sum_{s=1}^{t-1} (\mu^*(s) - \EE{x_{I_s}(s)|\hist_{s-1}}) 
		\\&= \sum_{i=1}^K \Delta_i T_i(t-1) 
		\leq \sum_{i=1}^K C \frac{\log(n/\delta)}{\Delta_i} 
		\leq \frac{C}{\Cib}\sqrt{K n \log (n/\delta)} 
	\end{align*}
	for $C > \frac{101}{100} \Cpp (2\ci+1) + 4 \Cwid \Cgap^2$.
	By Hoeffding-Azuma's inequality (Lemma~\ref{l:max}) we also have 
	\begin{align*}
		\pr{\max_{1 \leq t \leq n} \sum_{s=1}^{t-1} (x_{I_s}(s) - \EE{x_{I_s}(s)|\hist_{s-1}}) 
			\geq \sqrt{2 n \log(n/\delta)} }
		\leq (\delta/n) \leq 3\delta/4 .
	\end{align*}
	Thus the lemma follows for $\Cib \geq 522$ which satisfies $\Cib \geq C/\Cib+1$.
\end{proof}

\subsection{Considering Step 4.a}

\begin{lemma}\label{l:qs} If the statements in Lemma~\ref{l:sto-prelim} hold, then
	\[ 
	\PPik{\mbox{The condition of Step 4.a is triggered for arm $i$ in its phase $k$}} \leq \qsto := 0.21.
	\] 
\end{lemma}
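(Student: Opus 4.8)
The plan is to show that, within phase~$k$, the event that Step~4.a fires is contained in a single deviation event for a martingale whose number of nonzero increments is controlled by $L_i^0/K$ rather than by the (possibly huge) phase length $L_{ik}$. The first observation is that in phase~$k$ the test threshold is $\Caa\egp_i L_{ik}p_{ik}=\Caa\egp_i L_i^0/K$, independent of $L_{ik}$, so the triggering event is exactly $\max_{\tau_{i,k}\le s\le t<\tau_{i,k+1}}\hD_i(s,t)\ge \Caa\egp_i L_i^0/K$. I would split $\hD_i(s,t)=M_i(s,t)+(\mu_i-\emu_i)\,T_i(s,t)$, where $M_i(s,t)=\sum_{\tau=s}^t\II{I_\tau=i}(x_i(\tau)-\mu_i)$ is a martingale difference sum and the second term is the bias caused by $\emu_i\ne\mu_i$. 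The crucial point is that although a phase spans up to $L_{ik}$ time steps, arm~$i$ is played only with the small probability $p_{ik}=L_i^0/(KL_{ik})$, so the expected number of plays, and hence the number of nonzero increments of $M_i$, is only $L_i^0/K$.

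For the bias, since the statements of Lemma~\ref{l:sto-prelim} are assumed to hold and $\emu_i=\hmu_i(\nbi-1)$ with $i\in\cA(\nbi-1)$, Lemma~\refSP{wid} gives $|\emu_i-\mu_i|\le\width_i(\nbi-1)/2=\egp_i/(2\Cgap)$. Writing $N$ for the number of plays of arm~$i$ in the phase, the bias is at most $(\egp_i/(2\Cgap))\,N$. I would then bound $N\le N_{\max}:=\tfrac32 L_i^0/K$ with overwhelming probability by Bernstein's inequality (Lemma~\ref{l:bernstein}) applied to $\II{I_\tau=i}-p_i(\tau)$: both the conditional mean and variance of $N$ are at most $L_i^0/K$, so $\PPik{N>N_{\max}}\le\exp(-\Omega(L_i^0/K))=\exp(-\Omega(\Cpp/\egp_i^2))$, which is negligible because $\egp_i\le 2$. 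On $\{N\le N_{\max}\}$ the bias is at most $\tfrac{3}{4}L_i^0\egp_i/(K\Cgap)$, so the triggering event forces $\max_{s\le t}M_i(s,t)\ge z:=\frac{L_i^0\egp_i}{K}\bigl(\Caa-\tfrac{3}{4\Cgap}\bigr)$.

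It remains to bound $\PPik{\max_{s\le t}M_i(s,t)\ge z}$. Since $M_i$ changes only at the $N\le N_{\max}$ plays, I would pass to the play-indexed martingale (padding with zero increments up to $N_{\max}$ terms) and apply Corollary~\ref{cor:maxmax}, whose range bound handles the maximum over both endpoints $s\le t$ at once; each play contributes an increment $x_i(\tau)-\mu_i$ of range at most~$1$, so $\sum_k(b_k-a_k)^2\le N_{\max}=\tfrac32 L_i^0/K$. This yields $2\exp\bigl(-z^2/(2N_{\max})\bigr)\le 2\exp\bigl(-\tfrac{\Cpp}{3}(\Caa-\tfrac{3}{4\Cgap})^2\bigr)$, using $L_i^0\egp_i^2/K\ge\Cpp$. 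With $\Cpp=1300$, $\Caa=1/10$, and $\Cgap=60$ this is below $0.08$, and adding the negligible contribution from $\{N>N_{\max}\}$ gives $\qsto\le 0.21$.

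I expect the main obstacle to be precisely the simultaneous maximization over the start point~$s$ and the end point~$t$ of $\hD_i(s,t)$: a naive application of a range inequality over the whole phase would pay $\sum_k(b_k-a_k)^2\approx L_{ik}$ and be useless for long phases. The resolution---indexing the martingale by the plays of arm~$i$ and bounding their number---is what converts the relevant scale from $L_{ik}$ down to $L_i^0/K$, and it is the one place where one must check that the early termination of a phase (a stopping time) is compatible with the concentration inequality, e.g.\ by dominating $N$ by the play count over the full length-$L_{ik}$ window before applying Bernstein.
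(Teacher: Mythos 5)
Your proof is correct and takes essentially the same route as the paper's: bound the number of plays of arm~$i$ in the phase by Bernstein's inequality, absorb the bias $|\mu_i-\emu_i|\le\egp_i/(2\Cgap)$ from Lemma~\refSP{wid} into a reduced threshold, and apply Corollary~\ref{cor:maxmax} to the play-indexed martingale whose squared-range sum is controlled by the play count rather than by $L_{ik}$. The only differences are cosmetic --- you cap the plays at $\tfrac32 L_i^0/K$ where the paper uses $2L_i^0/K$ (yielding a slightly smaller final constant), and you make explicit the decomposition $\hD_i=M_i+(\mu_i-\emu_i)T_i$ and the stopping-time/padding point that the paper leaves implicit.
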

\begin{proof}
	The probability of triggering the condition in phase $k$ is  
	\[
	\PPik{\max_{\tau_{i,k} \leq s \leq t < \tau_{i,k+1}} \hD(s,t) \geq \Caa \egp_i L_{ik} p_{ik}} . 
	\]
	We first bound the number of plays in this round, $T_i(\tau_{i,k},\tau_{i,k+1}-1)$. Applying Bernstein's inequality (Lemma~\ref{l:bernstein}) with $b=1$, $V=L_{ik}p_{ik}$, and 
	$z=L_{ik}p_{ik}$ we get
	\[
	\PPik{T_i(\tau_{i,k},\tau_{i,k+1}-1) \geq 2L_{ik}p_{ik}}
	\leq \exp\left(-\frac{L_{ik}^2 p_{ik}^2}{2L_{ik} p_{ik} + 2L_{ik} p_{ik}/3}\right)
	\leq \exp\left(-\frac{3\Cpp}{8\egp_i^2} \right) .
	\]
	By Lemma~\refSP{wid} and Step~2.b, $\mu_i - \emu_i \leq \width_i(\nbi-1)/2 \leq \egp_i/(2\Cgap)$.
	Conditioning on $T_i(\tau_{i,k},\tau_{i,k+1}-1) < 2L_{ik}p_{ik}$ and applying Corollary~\ref{cor:maxmax} of Hoeffding-Azuma's inequality with  
	\[ z=\Caa \egp_i L_{ik} p_{ik} -(\mu_i - \emu_i)2L_{ik} p_{ik}=(\Caa  -1/\Cgap)\egp_iL_{ik} p_{ik} \]
	yields
	\begin{align*}
		&\PPik{\left. \max_{\tau_{i,k} \leq s \leq t < \tau_{i,k+1}} \hD(s,t) \geq \Caa \egp_i L_{ik} p_{ik}
			\right| T_i(\tau_{i,k},\tau_{i,k+1}-1) < 2L_{ik}p_{ik}} 
		\\&\leq 2\exp\left(-\left(\Caa-\frac{1}{\Cgap}\right)^2\frac{(\egp_i L_{ik} p_{ik} )^2}{4L_{ik}p_{ik}}\right)
		\leq 2\exp\left(-\left(\Caa-\frac{1}{\Cgap}\right)^2\frac{\Cpp}{4}\right) 	
		\leq 0.21
	\end{align*}
\end{proof}

\end{document}